\newtheorem{theorem}{Theorem}
\newtheorem{lemma}{Lemma}
\begin{document}
% \bstctlcite{IEEEexample:BSTcontrol}
    \title{PAC-Bayes Bounds for Meta-learning with \\ Data-Dependent Prior}
  \author{Tianyu Liu,~\IEEEmembership{Student Member,~IEEE,}
      Jie~Lu,~\IEEEmembership{Fellow,~IEEE,}
      Zheng~Yan,~\IEEEmembership{Member,~IEEE,}
      and~Guangquan Zhang,~\IEEEmembership{Member,~IEEE}% <-this % stops a space
}

% The paper headers
% \markboth{IEEE 
% }{Roberg \MakeLowercase{\textit{et al.}}: test}

% ====================================================================
\maketitle

% === ABSTRACT ====================================================================
% =================================================================================
\begin{abstract}
%\boldmath
By leveraging experience from previous tasks, meta-learning algorithms can achieve effective fast adaptation ability when encountering new tasks. However it is unclear how the generalization property applies to new tasks. Probably approximately correct (PAC) Bayes bound theory provides a theoretical framework to analyze the generalization performance for meta-learning. We derive three novel generalisation error bounds for meta-learning based on PAC-Bayes relative entropy bound. Furthermore, using the empirical risk minimization (ERM) method, a PAC-Bayes bound for meta-learning with data-dependent prior is developed. Experiments illustrate that the proposed three PAC-Bayes bounds for meta-learning guarantee a competitive generalization performance guarantee, and the extended PAC-Bayes bound with data-dependent prior can achieve rapid convergence ability.
\end{abstract}

% === KEYWORDS ====================================================================
% =================================================================================
\begin{IEEEkeywords}
Meta-learning, statistical learning, generalization, PAC-Bayes bound, data dependant prior
\end{IEEEkeywords}

% For peer review papers, you can put extra information on the cover
% page as needed:
% \ifCLASSOPTIONpeerreview
% \begin{center} \bfseries EDICS Category: 3-BBND \end{center}
% \fi
%
% For peerreview papers, this IEEEtran command inserts a page break and
% creates the second title. It will be ignored for other modes.
\IEEEpeerreviewmaketitle

% ====================================================================
% ====================================================================
% ====================================================================

% === I. INTRODUCTION =============================================================
% =================================================================================
\section{Introduction}\label{section_1}

\IEEEPARstart{M}{achine} learning models often require training with a large number of samples, for example, image classification issue \cite{xing2017deep,  tang2018deep,luo2013sparse,liu2020survey}. Besides, traditional machine learning algorithms mainly focus on a single task. But it is generally difficult to collected so much labelled data. So how to train a model when only a small amount of data is available? More to the point, since humans can learn new skills much faster and more effectively, how can we build such a model, which can reflect aspects of human learning? That is what meta-learning sets out. \textit{Meta-learning} -- or ``learning to learn \cite{schmidhuber1987evolutionary}” -- is capable of accurately adapting or generalizing to new tasks and new environments that not encountered during training time. Using the experience acquired on previous tasks, meta-learning can adapt to new tasks quickly, even in the face of scant data. The meta-learning algorithms can be divided into three major categories \cite{finn2019meta}: \textit{black-box algorithms}, \textit{non-parametric methods} and \textit{optimization-based algorithms}.

% {\bf Black-box adaptation.}
The core idea of black-box adaptation meta-learning is to train a neural network to represent a meta-learner. With the aim of achieving a fast adaptation ability, \cite{santoro2016meta} a meta-learning algorithm is presented with memory-augmented neural networks, which can summarize and storage important knowledge. When facing new learning tasks, the memory-based method can extract certain skills it has experienced to assist in the current process. In order to adapt to access past experiences, a simple neural attentive-learner (SNAIL) is proposed by \cite{mishra2017simple}. By using attention architectures established in the meta-learner, SNAIL can determine what pieces of information it needs to select from its experience it gathers. SNAIL architectures are easier to train than traditional RNN, such as LSTM.

% {\bf Non-parametric methods.}
Non-parametric methods try to utilize a non-parametric learner as meta-learner instead of parametric models. Non-parametric methods are simple and perform well in few-shot learning. \cite{bromley1994signature} proposes a siamese neural network, which contains two sub-networks with same weights. During training phase, the two sub-networks can extract features from two different input vectors, and then compute the distance between the two feature vectors. Matching networks are another non-parametric method, which is presented by \cite{vinyals2016matching}. In order to learn from a few examples, matching network framework learns a net structure that maps few labelled training datasets and an unlabelled instance to its label. Combined with recent advances in attention and memory, the matching networks enable rapid learning. Besides, \cite{snell2017prototypical} proposes a prototypical network, where classification problem is regarded as finding the prototype center of each category in the semantic space and then predict the category of the new sample by the nearest neighbor classifier. This method mainly combines the prototype network with clustering algorithm.

% {\bf Optimization-based inference.}
Different from two aforementioned algorithms, optimization-based meta-learning algorithms learn to train the parameter vector to represent the meta-learner through optimization. In the traditional gradient-descent approach, optimization updates rules, for example the learning step, are still hard to design. \cite{andrychowicz2016learning} considers this issue as a learning problem, allowing the optimization algorithms learn to exploit update rules structure in an automatic way. Furthermore, \cite{ravi2016optimization} propose another LSTM-based meta-learner model by combining gradient descent and LSTM algorithm, which is applied to train neural network.
% In that method, the learning size and other parameters of the gradient descent method correspond to the parameters in the LSTM network.
% to learn the exact optimization algorithm
In order to extract common knowledge from previous task so as to achieve fast adaptation ability, \cite{finn2017model} proposes a model agnostic meta-learning (MAML) algorithm. The key idea of MAML is to learn a set of initialization parameter that allows efficient learning of new tasks. However, MAML requires the computation of second-order derivative which may exhibit instabilities. Therefore \cite{nichol2018reptile} present a scalable meta-learning algorithm, called Reptile, which does not calculate any second derivatives. Besides, \cite{antoniou2018train} addresses the training of MAML and propose several tricks to improve the stability of MAML.

One of a majority challenges in few-shot learning is task ambiguity. 
\cite{finn2018probabilistic} proposes a probabilistic MAML, which tries to incorporate a parameter distribution with neural network that is trained via a variational lower bound.  In order to improve the robustness of MAML, \cite{kim2018bayesian} propose a Bayesian MAML algorithm. Compared with a point estimate or a simple Gaussian approximation in fast adaptation phase, this algorithm is capable of learning very complex uncertainty structure. The Bayesian MAML outperforms vanilla MAML in terms of accuracy and robustness. Furthermore, based on Bayesian inference framework and variational inference, \cite{lee2019learning} propose a new Bayesian task-adaptive meta-learning (Bayesian TAML) algorithm for imbalanced and out-of-distribution tasks. In addition, several improved MAML are also introduced, such as Alpha MAML \cite{behl2019alpha}, meta-learning with latent embedding optimization \cite{rusu2018meta} and Bayesian hierarchical modeling based MAML \cite{grant2018recasting}.

Although meta-learning algorithms provide a powerful inductive biases based on various tasks, even with those which comprise only limited data, its generalization performance is poorly understood. PAC-Bayes theory, known as generalization error bounds theory, provides a theoretical framework for estimating the generalization performance of the machine learning model.

The first PAC-Bayes theory was established by McAllester \cite{mcallester1999some}, which provides generalization error upper bounds for the performance of randomized learning algorithms. Then this method was subsequently used to analyze the generalization-error bound of the stochastic neural network \cite{langford2001not}. PAC-Bayes bound theories were meant for a wide range of approximate Bayesian GP classification issues \cite{seeger2002pac}, \cite{catoni2007pac}. One source \cite{neyshabur2017exploring} tries to explain the generalization in neural network from the view of norm-based control, sharpness and robustness, and attempts to build a connection between sharpness and PAC-Bayes theory. The systemically undertaken study is addressed with a view to training stochastic neural networks based on the PAC-Bayes bounds in \cite{perez2020tighter}. 

That PAC-Bayes theory is only suitable for bounded loss function and i.i.d data. PAC-Bayesian bounds tailored for the sub-Gaussian or sub-Gamma loss family, such as negative log-likelihood function, is also developed by \cite{germain2016pac} and \cite{alquier2016properties}. However, those algorithms require a distribution parameter, such as a variance factor and a scale parameter. Therefore \cite{holland2019pac} proposes an exponential bound under the assumption that the first three moments of the loss distribution are bounded. By introducing the special boundedness condition,  \cite{haddouche2020pac} expands the PAC-Bayesian theory to learning problems with unbounded loss functions.
 
Recently, there has been a gradually increasing interest in research on overparameterized deep neural networks and SGD.  \cite{london2017pac} study the generalization of randomized learning algorithms. trained with SGD. Inspired by \cite{langford2001not}, \cite{dziugaite2017computing} obtains nonvacuous generalization numerical bounds for deep stochastic neural network classifiers with many more parameters than are present in the training data. The first non-vacuous generalization bound for compressed networks applied to the ImageNet classification problem is provide in \cite{zhou2018non}. Moreover, \cite{dziugaite2018entropy} further investigates the relationship between generalization performance and SGD. 

As mentioned above, PAC-Bayesian bound is only valid for stochastic classifiers, although a growing body of literature illustrates efforts to construct PAC-Bayes bounds on deterministic classifiers. To fill this gap, \cite{miyaguchi2019pac} develops a PAC-Bayesian transportation bound, by unifying the PAC-Bayesian analysis and the chaining method. This generalization error bound relates the distance between any two predictors, both for stochastic classifiers and deterministic classifiers. A new perturbation bounds for feedforward neural networks is derived based on the sharpness of a model class by \cite{neyshabur2017pac}. In addition, \cite{nagarajan2019deterministic} presents a general PAC-Bayesian framework for the deterministic and uncompressed neural network by leveraging the noise-resilience of deep neural networks on training data.

In order to achieve tighter generalization error bounds, \cite{parrado2012pac} proposes two alternative prior distributions: one is to learn a prior distribution from a separate training data set which is not used in computing the bound, and another is to consider an expectation prior. \cite{lever2013tighter} further investigates that PAC-Bayes bound with localized prior distribution defined in terms of the data generating distribution. Under the stability of the hypothesis, a Gaussian prior distribution, informed by the data-generating distribution and centered at the expected output, is proposed for the SVM classifier \cite{rivasplata2018pac}. More discussion can be seen in \cite{lever2010distribution} and \cite{oneto2016pac}. Furthermore, because data distribution is usually unknown, \cite{dziugaite2018data} develops a PAC-Bayes bound via $\epsilon$-differentially private data-dependent prior. 

PAC-Bayes theory provides a theoretical framework for the generalization performance analysis of meta-learning. This theory can be considered as a generalized framework which is more resistant to over-fitting and that yields a generalization error upper bound that holds with an arbitrarily high probability. For meta learning, \cite{pentina2014pac} provides a generalization error bound within the PAC-Bayes framework for lifelong learning. Furthermore, two principled algorithms are implemented, including parameter and representation transfers. More recently,  \cite{amit2018meta} develops a theoretical framework for meta-learning, allowing extended various PAC-Bayes bounds to meta-learning. To add to this, \cite{Huang_Huang_Li_Li_2020} considers the scenario in which a common model set is used for model averaging via a model selection procedure that accounts for the model’s uncertainty. Two data-based algorithms are proposed to obtain ideal priors for model averaging.

Specifically, a gradient-based algorithm which minimizes an objective function derived from PAC-Bayes bounds is also applied to training deep neural networks. The tighter bounds might achieve a better generalization performance. Besides, in PAC-Bayes theory, prior distribution is selected randomly before learning. Generally, with PAC-Bayes, the generalization error upper bound is primarily determined by the distance between prior and posterior distributions. Obviously the choice of prior distribution affects the performance of the PAC-Bayes bound significantly. 

Motivated by the previous discussions, three novel generalization error bounds for meta-learning are presented. Furthermore, a data-based approach for adjusting prior distribution is developed, and the specific implementations of those algorithms are given. The main contributions are concluded as follows

% based on the PAC-Bayes relative entropy theory, we propose three novel PAC-Bayes bounds for meta-learning, including {\it meta-learning PAC-Bayes $\lambda$ bound}, {\it meta-learning PAC-Bayes quadratic bound}, and . First of all, we are going to investigate the {\it PAC-Bayes relative entropy bound} and then extend those bounds to meta-learning.

\begin{itemize}
  \item In order to improve generalization performance, based on the PAC-Bayes relative entropy theory, {\it meta-learning PAC-Bayes $\lambda$ bound} and {\it meta-learning PAC-Bayes quadratic bound} are proposed;
  \item Using the variational Kullback-Leibler (KL) bound, {\it meta-learning PAC-Bayes variational bound} is investigated, which can achieve tighter generalisation error bound by taking the piecewise combination of the two above-mentioned meta-learning bounds;
  \item Based on ERM method, a PAC-Bayes bound for meta-learning with data-dependent prior is developed by adjusting the priors to attain fast convergence ability;
  \item Empirical demonstration illustrates that the proposed algorithms achieve competitive generalization guarantee and better convergence performance.
\end{itemize}

The rest of this paper is organized as follows. The classical PAC-Bayes bounds for both single task and meta-learning is introduced in Section \ref{section_3_Preliminaries}. Section \ref{section_4_bound} investigates three novel PAC-Bayes bounds for meta-learning, based on the PAC-Bayes relative entropy bound theory. A PAC-Bayes bound for meta-learning with data-dependent prior is developed in Section \ref{section_5_bound_dp}. The implementation details are described in Section \ref{section_6_practical}.  Section \ref{section_7_experiments} provides numerical examples to verify the proposed algorithms. Finally, Section \ref{section_8_conclusion} draws some conclusions.

\section{Preliminaries: PAC-Bayes theorem}\label{section_3_Preliminaries}

In the classical statistical learning model setting, a set of dependent samples $ S = \{z_i\}_{i=1}^m $ is randomly drawn from the unknown data distribution $ \mathcal{D}  $ over space $\mathcal{Z}$. In the supervised learning, $\mathcal{Z} = \mathcal{X} \times \mathcal{Y}$, where $\mathcal{X} \subset \mathbb{R}^d$ and $\mathcal{Y} \subset \mathbb{R}$, each sample $Z_i = (X_i, Y_i)$ consists of an input $X_i$ and its corresponding label $Y_i$. The learning objective is to find a classifier $h \in \mathcal{H}$ that predicts the label and minimizes the expected loss $\mathbb{E}\{\ell(h,z)\}$, where $\mathcal{H}$ is considered as the hypothesis space and $\ell(h,z): \mathcal{H} \times \mathcal{Z} \to \mathbb{R}$ is the loss function which are used to measure the performance of prediction. For the classification problems, the loss function is always bounded in $[0,1]$. In the statistical inference stage, the core idea of machine learning is to minimize the {\it expected error} $er(h, \mathcal{D})$ under the data distribution $\mathcal{D}$ 
\begin{eqnarray}
\begin{array}{cc}
er(h, \mathcal{D}) = \mathbb{E}_{z\sim\mathcal{D}}\ell(h,z).
\end{array}
\end{eqnarray}
Since the data distribution $\mathcal{D}$ is unknown,  {\it generalization error} $er(h, \mathcal{D})$ cannot be calculated. Therefore, the {\it empirical error} $\widehat{er}(h,S)$ gives an observable estimation
\begin{eqnarray}
\begin{array}{cc}
\widehat{er}(h,S) = \frac{1}{n} \sum^{n}_{i=1}\ell(h,z_i).
\end{array}
\end{eqnarray}
Under certain neural network conditions,  in order to minimize the empirical risk, a single classifier $h_{w} \in \mathcal{H}$ is selected. However, this may cause that the learned classifier $ \widehat{h}_{w} $ to form too close a fit to a limited set of data points $ S $ --- creating a case of over-fitting, which can be measured by $ er( \widehat{h}, \mathcal{D}) - \widehat{er}( \widehat{h}, S) $. Various methods can be used to avoid over-fitting, various methods are used, including complexity regularization. 

\subsection{PAC-Bayes bounds for single task}
PAC-Bayes theory, known as generalization error bound theory, is a framework for theoretical generalization performance analysis of a machine-learning model.

In the PAC-Bayes theory, in contrast with the classical neural network which aims to learn data-dependent parameter weights, the probability neural network is employed to learn a data-dependent distribution over weights. Specifically, based on the ``prior" distribution $ P \in \mathcal{M} $, PAC-Bayes bound tries to learn a posterior distribution $Q(S,P) \in \mathcal{M}$ from training data $S$, where $ \mathcal{M} $ denotes the set of distributions over hypothesis space $\mathcal{H}$. Then {\it generalization error} $er(Q, S)$ and {\it empirical error} $\widehat{er}(Q, S)$ are defined as the expectation over posterior distribution $Q$, such as $er(Q, \mathcal{D}) \triangleq \underset{h \sim Q}{\mathbb{E}} er(h, S)$ and $\widehat{er}(Q, \mathcal{D}) \triangleq \underset{h \sim Q}{\mathbb{E}} er(h, S)$. The first PAC-Bayes generalization theory for single task learning issue has been proposed by \cite{mcallester1999some}.

\begin{lemma} \label{mcallester1999}
(McAllester's single-task bound \cite{mcallester1999some}). Let $P \in \mathcal{M}$ be some prior distribution over $\mathcal{H}$. Then for any $\delta \in (0,1]$, the following inequality holds uniformly for all posteriors distributions $Q \in \mathcal{M}$ with probability at least $1 - \delta$
\begin{eqnarray}
\begin{array}{cc}
er(Q, \mathcal{D}) \leq \widehat{er}(Q, S) + \sqrt{ \frac{ D(Q\|P) + \log \frac{m}{\delta}} {2(m-1)}}.
\end{array}
\end{eqnarray}
\end{lemma}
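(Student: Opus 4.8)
The plan is to follow the standard two-ingredient recipe for PAC-Bayes bounds: a change-of-measure step that trades the posterior $Q$ for the fixed prior $P$ at the cost of the relative entropy $D(Q\|P)$, followed by a concentration step carried out entirely under the data-independent prior. First I would invoke the variational (Donsker--Varadhan) representation of the KL divergence: for any measurable $\phi:\mathcal{H}\to\mathbb{R}$ and any $Q\in\mathcal{M}$,
\begin{eqnarray}
\begin{array}{cc}
\underset{h\sim Q}{\mathbb{E}}\,\phi(h) \leq D(Q\|P) + \log \underset{h\sim P}{\mathbb{E}}\, e^{\phi(h)}.
\end{array}
\end{eqnarray}
This inequality is an immediate consequence of the non-negativity of the KL divergence between $Q$ and the tilted distribution $Q_\phi \propto e^{\phi}P$, and it is the crux of why a data-dependent $Q$ can be controlled: the exponential moment on the right is taken under $P$, which is fixed before $S$ is seen.

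The choice of $\phi$ fixes the final shape of the bound. Writing $\Delta(h) = er(h,\mathcal{D}) - \widehat{er}(h,S)$, I would take $\phi(h) = 2(m-1)\,\Delta(h)^2$. For each fixed $h$, the empirical error $\widehat{er}(h,S)$ is an average of $m$ independent $[0,1]$-valued losses with mean $er(h,\mathcal{D})$, so $\Delta(h)$ is sub-Gaussian by Hoeffding's lemma; the key concentration estimate I need is the exponential-moment bound
\begin{eqnarray}
\begin{array}{cc}
\underset{S}{\mathbb{E}}\; e^{2(m-1)\Delta(h)^2} \leq m,
\end{array}
\end{eqnarray}
valid for every fixed $h$. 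Because $P$ does not depend on $S$, Fubini's theorem lets me swap the two expectations to get $\mathbb{E}_S\,\mathbb{E}_{h\sim P}\, e^{2(m-1)\Delta(h)^2} \leq m$, and Markov's inequality then yields that, with probability at least $1-\delta$ over the draw of $S$,
\begin{eqnarray}
\begin{array}{cc}
\underset{h\sim P}{\mathbb{E}}\; e^{2(m-1)\Delta(h)^2} \leq \frac{m}{\delta}.
\end{array}
\end{eqnarray}
Crucially, this high-probability event is stated purely in terms of $P$ and $S$ and involves no $Q$, which is exactly what will make the conclusion hold uniformly over all posteriors.

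To assemble the pieces, I would combine the change-of-measure inequality with the last display: on the $1-\delta$ event, simultaneously for every $Q\in\mathcal{M}$,
\begin{eqnarray}
\begin{array}{cc}
2(m-1)\,\underset{h\sim Q}{\mathbb{E}}\,\Delta(h)^2 \leq D(Q\|P) + \log\frac{m}{\delta}.
\end{array}
\end{eqnarray}
Since $er(Q,\mathcal{D}) - \widehat{er}(Q,S) = \mathbb{E}_{h\sim Q}\,\Delta(h)$ and $x\mapsto x^2$ is convex, Jensen's inequality gives $\mathbb{E}_{h\sim Q}\,\Delta(h)^2 \geq \big(er(Q,\mathcal{D}) - \widehat{er}(Q,S)\big)^2$; substituting, dividing by $2(m-1)$, and taking square roots delivers exactly the claimed bound. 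The main obstacle is the concentration estimate: pinning down the precise constants $2(m-1)$ in the denominator and $m$ inside the logarithm requires the sharp exponential-moment bound above rather than a crude Hoeffding tail. Establishing it cleanly is most naturally done by passing through the binomial relative entropy, using Pinsker's inequality $\mathrm{kl}(\widehat{er}\,\|\,er)\geq 2\Delta^2$ together with a bound $\mathbb{E}_S\, e^{m\,\mathrm{kl}(\widehat{er}(h,S)\|er(h,\mathcal{D}))}\leq \xi(m)$ on the binomial moment; matching the resulting factor to $m$ (rather than $2\sqrt{m}$ or $m+1$) without loosening the rate is the delicate bookkeeping step, after which the remainder of the argument is routine.
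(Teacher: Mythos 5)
The paper never proves Lemma~\ref{mcallester1999}: it is imported verbatim from McAllester's work, so there is no in-paper argument to compare against, and your proposal must stand on its own --- which it essentially does, since the route you describe (Donsker--Varadhan change of measure, the choice $\phi(h)=2(m-1)\Delta(h)^2$, Fubini plus Markov under the data-independent prior, then Jensen to pull the square out) is exactly the canonical proof of this bound. One imprecision is worth fixing: with the two-sided deviation $\Delta(h)^2$, integrating Hoeffding's tail $\Pr[|\Delta|>x]\le 2e^{-2mx^2}$ gives $\mathbb{E}_S\,e^{2(m-1)\Delta^2}\le 2m-1$, not $m$. To get the constant $m$ you should work with the positive part $\Delta_+=\max(\Delta,0)$, for which the one-sided Hoeffding tail $\Pr[\Delta_+>x]\le e^{-2mx^2}$ yields
\begin{equation*}
\mathbb{E}_S\,e^{2(m-1)\Delta_+^2}\;\le\; 1+\int_0^\infty 4(m-1)\,x\,e^{2(m-1)x^2}\,e^{-2mx^2}\,dx \;=\; 1+4(m-1)\int_0^\infty x\,e^{-2x^2}\,dx \;=\; m,
\end{equation*}
and since $x\mapsto\max(x,0)^2$ is convex, Jensen still applies and delivers the stated one-sided conclusion. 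This also defuses your closing worry: no ``delicate bookkeeping'' through the binomial relative entropy is needed to hit the constant $m$ --- the direct tail integration above already does it for every $m$, whereas the Pinsker-plus-Maurer route ($\mathbb{E}_S\,e^{m\,\mathrm{kl}}\le 2\sqrt{m}$) is the machinery behind the \emph{tighter} relative-entropy bound with $\log(2\sqrt{n}/\delta)$ that the paper states at the start of Section~\ref{section_4_bound}, not behind this McAllester form.
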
 

Here $D\left(\rho \| \rho_{0}\right)$ is the KL divergence, which measures the difference between two distributions:
\begin{eqnarray}
\begin{array}{cc}
\mathrm{KL}\left(\rho \| \rho_{0}\right) \stackrel{\mathrm{def}}{=} \underset{c \sim \rho}{\mathbb{E}}\left[\ln \frac{ {\rm d} \rho(c)}{ {\rm d} \rho_{0}(c)}\right],
\end{array}
\end{eqnarray}
where $\frac{ {\rm d} \rho(c)}{ {\rm d} \rho_{0}(c)}$ is the Radon-Nikodym derivative of $\rho$ with respect to $\rho_{0} .$ The Radon-Nikodym derivative can be substituted with the ratio of Probability Density Functions (PDF) if they exist.

Generally, a PAC-Bayes generalization theory attempts to balance the discrepancy between a prior distribution $P$ with posterior distribution $Q$, and empirical risk $\widehat{er}(Q, S)$. Here, we should emphasize that prior distribution $P$ is selected randomly before learning, which must not be dependent on training data. Besides, posterior distribution $Q(S,P)$ does not necessarily have to be the traditional Bayesian posterior distribution. The prior distribution $P$ is used chiefly to measure the distance of hypothesis space $\mathcal{H}$. Obviously, the choice of prior distribution $P$ significantly affects the performance of PAC-Bayes bound significantly.

\subsection{PAC-Bayes bounds for meta-learning}
In this subsection, the PAC-Bayes bound for meta-learning is introduced. meta-learning comprises two parts: meta-learner extracts common knowledge (prior knowledge) from different observed tasks, and base learner aims to adapt new tasks. In the PAC-Bayes meta-learning framework, we assume that all tasks belong to the same distribution $\mathcal{T}$. Different tasks share the same sample space $\mathcal{Z}$ and loss function $\ell(h,z): \mathcal{H} \times \mathcal{Z} \to \mathbb{R}$. For each observed task $\tau_i$, the corresponding $S_i$ are generated from the unknown distribution $S_i \sim \mathcal{D}^{m_i}_i$, where $m_i$ is the number of training samples for task $i$. As mentioned before, the {\it meta-learner} tries to extract common knowledge $P \sim \mathcal{M(H)}$ from tasks $\tau$ before, based on the prior information $P$ and new task's data $S$, {\it base learner} learns the posterior information $Q(S,P): \mathcal{Z}^m \times \mathcal{M(H)} \to \mathcal{M(H)}$ to infer the process applied when faced with new tasks. Here prior information $P$ and posterior information $Q(S,P)$ are represented as distribution of neural network weights $w$, characterized as the mean and co-variance.

For the meta-learning generalization error bound theory, based on the {\it hyper prior} $\mathcal{P}$, which is a distribution over prior distribution $P$, {\it meta-learner} aims to learn a {\it hyper posterior} $\mathcal{Q}(P)$ by utilizing the observed tasks. When encountering new tasks, {\it base learner} samples a prior distribution $P$ from the hyper-posterior $\mathcal{Q}(P)$. Capitalizing on  observed samples of the new task, {\it base learner} infers a posterior distribution $Q(S,P)$. The performance of hyper-posterior $\mathcal{Q}$ can be measured by the expectation loss of prior $P$ when learning new tasks, the so-called {\it generalization error}
\begin{eqnarray}
\begin{array}{cc}
er(\mathcal{Q}, \tau) \triangleq \underset{P \sim \mathcal{Q}}{\mathbb{E}}\ \underset{(\mathcal{D}, m) \sim \mathcal{T}}{\mathbb{E}}\ \underset{S \sim \mathcal{D}^{m}}{\mathbb{E}}\ \underset{h \sim Q(S, P)}{\mathbb{E}}\ \underset{z \sim \mathcal{D}}{\mathbb{E}}\ \ell(h, z).
\end{array}
\end{eqnarray}

While $er(\mathcal{Q}, \tau)$ is not commutable in practice, nevertheless, we can estimate this generalization error by the {\it empiric error}
\begin{eqnarray}
\begin{array}{cc}
\widehat{er}\left(\mathcal{Q}, S_{1}, \ldots, S_{n}\right) \triangleq \underset{P \sim \mathcal{Q}}{\mathbb{E}} \frac{1}{n} \sum_{i=1}^{n} \widehat{e r}\left(Q\left(S_{i}, P\right), S_{i}\right).
\end{array}
\end{eqnarray}

Different to the single-task PAC-Bayes bound, in meta-learning, meta-learner chooses a hyper-prior distribution $\mathcal{P}$ over prior distribution $P$, following observed samples for all training tasks, and updates it to hyper-posterior distribution $\mathcal{Q}$. Base learner selects a prior distribution $P \sim \mathcal{P}$ and updates it to posterior distribution $Q(S,P)$ when learning new tasks.

Firstly, we introduce the classical extended PAC-Bayes bound proposed by \cite{amit2018meta} as follows
\begin{lemma} \label{PAC_Bayes_meta_classic}
(Classical meta-learning PAC-Bayes bound \cite{amit2018meta}). Let $\mathcal{P} \in \mathcal{M}$ be some hyper-prior distribution over $\mathcal{H}$, and $Q$ be a base learner. Then for any $\delta \in (0,1]$, the following inequality holds uniformly for all hyper-posteriors distributions $\mathcal{Q} \in \mathcal{M}$ with probability at least $1-\delta$
\begin{eqnarray}\label{Classical_extend_PAC}
\begin{array}{l}
er( \mathcal{Q}, \tau ) \leq \frac{1}{n} \sum_{i=1}^{n} \underset{P \sim \mathcal{Q}}{\mathbb{E}} \hat{er} \left( Q_{i} \left( S_i, P\right), S_i \right) \\
+\frac{1}{n} {\sum_{i=1}^{n}} \sqrt{ \frac{ \big( D ( \mathcal{Q} || \mathcal{P} ) + \underset{ P \sim \mathcal{Q} }{ \mathbb{E} } D \left( Q  || P \right) + \log \frac{ 2 n m_i }{ \delta} \big) }{2\left( m_i - 1 \right)} } \vspace{2pt}\\
+ \sqrt{ \frac{1}{2(n-1)} \left( D ( \mathcal{Q} || \mathcal{P} ) + \log \frac{2n}{\delta} \right) }.
\end{array}
\end{eqnarray}
\end{lemma}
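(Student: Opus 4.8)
The plan is to view meta-generalization as two nested layers of generalization --- an \emph{environment} layer across tasks and a \emph{within-task} layer inside each observed task --- and to control each layer by a separate application of McAllester's single-task bound (Lemma~\ref{mcallester1999}), then glue the two estimates with a union bound that splits the confidence level $\delta$ evenly between the layers. Abbreviating $er_i(P)\triangleq er\!\left(Q(S_i,P),\mathcal{D}_i\right)\in[0,1]$ for the true error that prior $P$ incurs on the $i$-th observed task, and noting that $er(\mathcal{Q},\tau)$ is exactly the true meta-risk whose empirical counterpart over the $n$ observed pairs $(\mathcal{D}_i,S_i)$ is $\underset{P\sim\mathcal{Q}}{\mathbb{E}}\tfrac{1}{n}\sum_i er_i(P)$, I would start from the exact decomposition
\begin{eqnarray}
\begin{array}{l}
er(\mathcal{Q},\tau) = \Big( er(\mathcal{Q},\tau) - \underset{P\sim\mathcal{Q}}{\mathbb{E}}\,\tfrac{1}{n}\sum_{i=1}^{n} er_i(P) \Big) \\[3pt]
\qquad\qquad + \underset{P\sim\mathcal{Q}}{\mathbb{E}}\,\tfrac{1}{n}\sum_{i=1}^{n} er_i(P),
\end{array}
\end{eqnarray}
in which the first bracket is the environment gap and the second is the average true task error.

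For the environment layer I would regard the $n$ observed pairs $(\mathcal{D}_i,S_i)$ as i.i.d.\ draws from the task environment $\mathcal{T}$, the prior $P$ as the ``hypothesis,'' $er_i(P)$ as the bounded per-sample loss, and the hyper-prior $\mathcal{P}$ and hyper-posterior $\mathcal{Q}$ as the prior and posterior over hypotheses. Applying Lemma~\ref{mcallester1999} at this level with sample size $n$ and confidence $\delta/2$ bounds the environment gap by $\sqrt{(D(\mathcal{Q}\|\mathcal{P})+\log(2n/\delta))/(2(n-1))}$, which is precisely the third term of the statement.

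For the within-task layer I must bound each $er_i(P)$ by the empirical error $\widehat{er}(Q(S_i,P),S_i)$, and here lies the main obstacle. For a \emph{fixed} prior $P$ Lemma~\ref{mcallester1999} would immediately give $er_i(P)\le \widehat{er}(Q(S_i,P),S_i)+\sqrt{(D(Q\|P)+\log(m_i/\delta'))/(2(m_i-1))}$; but in meta-learning the prior is itself drawn from the \emph{data-dependent} hyper-posterior $\mathcal{Q}$, so a bound established for a data-independent prior cannot be invoked with $P\sim\mathcal{Q}$. To circumvent this I would fall back on the exponential-moment inequality underlying Lemma~\ref{mcallester1999}, namely $\underset{S_i}{\mathbb{E}}\,\underset{h\sim P}{\mathbb{E}}\exp\!\big(2(m_i-1)\Delta_i(h)^2\big)\le m_i$ with $\Delta_i(h)=er(h,\mathcal{D}_i)-\widehat{er}(h,S_i)$, and average it once more over $P\sim\mathcal{P}$ against the fixed hyper-prior. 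Markov's inequality then yields, with probability at least $1-\delta/(2n)$ over $S_i$, an event on which this mixed moment is at most $2n\,m_i/\delta$; on that event I would invoke the Donsker--Varadhan change-of-measure inequality applied to the \emph{joint} law of $(P,h)$, whose relative entropy factorizes by the chain rule into $D(\mathcal{Q}\|\mathcal{P})+\underset{P\sim\mathcal{Q}}{\mathbb{E}}D(Q\|P)$, after which Jensen's inequality pulls the expectations over $\mathcal{Q}$ and $P$ inside the square root and the logarithm. This is exactly what forces the two KL terms to share the single root of the second term, while the residual $\log(2n\,m_i/\delta)$ supplies its logarithmic factor; a union bound over $i=1,\dots,n$ keeps the total within-task failure probability below $\delta/2$. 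Intersecting this with the $1-\delta/2$ event of the environment layer delivers the claim with probability at least $1-\delta$. I expect the double change of measure to be the crux of the argument, since it is the step that absorbs the data-dependence of the prior and couples the environment-level and base-level penalties inside one term.
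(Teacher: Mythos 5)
Your proposal is correct and takes essentially the same route as the paper's: the paper likewise splits the argument into a within-task step and an environment step, absorbs the data-dependence of the prior via the same change of measure on the joint law of $(P,h)$ --- there phrased as the ``tuple hypothesis'' $f=(P,h)$ with prior $\pi=(\mathcal{P},P)$ and posterior $\rho=(\mathcal{Q},Q(S_i,P))$, whose KL divergence factorizes by the chain rule into $D(\mathcal{Q}\|\mathcal{P})+\underset{P\sim\mathcal{Q}}{\mathbb{E}}D(Q\|P)$ exactly as in your computation --- and closes with the same union bound using $\delta_0=\delta/2$ and $\delta_i=\delta/(2n)$. The only cosmetic difference is that you spell out the exponential-moment/Markov/Donsker--Varadhan machinery where the paper simply invokes the single-task bound on the extended hypothesis space.
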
 

It is obviously this PAC-Bayes meta-learning bound consists of empirical multi-task error plus two regularization terms. The first task-complexity term is the average of task-complexity terms of observed tasks, created by the finite number of samples in each observed tasks. This term converges to zero in the face of a large number of samples in each task. The second is an environment-complexity term, which is caused by a finite number of observed training tasks. Obviously, this term converges to zero if an infinite number of tasks is observed from the task environment.

\section{PAC-Bayes meta-learning bounds}\label{section_4_bound}
In this section, based on the PAC-Bayes relative entropy theory, we propose three novel PAC-Bayes bounds for meta-learning, including {\it meta-learning PAC-Bayes $\lambda$ bound}, {\it meta-learning PAC-Bayes quadratic bound}, and {\it meta-learning PAC-Bayes variational bound}. We begin by investigating the PAC-Bayes relative entropy bound and then extend those bounds to meta-learning.

\subsection{PAC-Bayes relative entropy bound}
With high probability $1-\delta$, PAC-Bayes bound theory provides a generalization performance guarantee for the learned model. The generalization error upper bound depends on empiric loss and a regularization item involves the distance between prior distribution and posterior distribution.

First of all, the PAC-Bayes relative entropy bound and its corresponding variants based on different inequalities are introduced. Similarly, in Theorem. \ref{mcallester1999}, with high probability $1-\delta$, the PAC-Bayes relative entropy bound holds that
\begin{eqnarray}
\begin{array}{rl}
\rm{kl} ( er(Q, \mathcal{D}) \| \widehat{er}(Q, S) ) \leq \frac{\mathrm{D} (Q \| P) + \log (\frac{2 \sqrt{n}}{\delta})}{n}.
\end{array}
\end{eqnarray}

% \operatorname{er} (Q, \mathcal{D}) \leq \widehat{er}(Q, S)

Here, as shown in (\ref{kl_Bernoulli}), $ \mathrm{kl} $ is known as the binary KL divergence, which is the divergence of two Bernoulli distributions with parameters $ q, q^{\prime} \in [0,1] $
\begin{eqnarray}\label{kl_Bernoulli}
\begin{array}{rl}
\mathrm{kl}\left(q \| q^{\prime}\right)=q \log \big( \frac{q}{q^{\prime}} \big) + (1-q) \log \big( \frac{1-q}{1-q^{\prime}} \big).
\end{array}
\end{eqnarray}

Obviously, with an arbitrarily high probability, the generalization error of the learned model is bounded by the summation of empirical loss, and a regularization element involves the distance between prior distribution and posterior distributions.

Applying the {\it refined Pinsker inequality} $ \mathrm{kl}(\hat{p} \| p) \geq \frac{(p-\hat{p})^{2}}{2 p} $, $ (\hat{p}, p \in(0,1), \hat{p}<p) $ yields
\begin{eqnarray}
\label{refined_pinsker_ineqn}
\begin{array}{l}
er(Q,\mathcal{D}) - \widehat{er}(Q, S) \leq \sqrt{2 er(Q,\mathcal{D}) \frac{\mathrm{D}\left(Q \| P \right) + \log \left(\frac{2 \sqrt{n} }{\delta} \right)}{n}}.
\end{array}
\end{eqnarray}

The PAC-Bayes relative entropy bound cannot be selected directly as the training objective function directly, because generalization error $er(Q, \mathcal{D})$ appears in the right side of the bound which cannot be used as an optimization objective. Therefore, the following two PAC-Bayes bounds are proposed.

% By solving this inequality, the following two PAC-Bayes bounds are proposed.

First, combining (\ref{refined_pinsker_ineqn}) with the inequality $\sqrt{ab} \leq \frac{1}{2}\left(\lambda a+\frac{b}{\lambda}\right), \lambda>0$ yields the {\it PAC-Bayes $\lambda$ bound}.
\begin{lemma} \label{PAC_Bayes_lambda}
(PAC-Bayes $\lambda$ bound \cite{thiemann2017strongly}). Let $P \in \mathcal{M}$ be some prior distribution over $\mathcal{H}$. Then for any $\delta \in (0,1]$ and $\lambda \in (0,2)$, the following inequality holds uniformly for all posteriors distributions $Q \in \mathcal{M}$ with probability at least $1-\delta$
\begin{eqnarray}
\begin{array}{cc}
er(Q,\mathcal{D}) \leq \frac{ \widehat{er}(Q, S) }{ 1 - \lambda / 2} + \frac{ \mathrm{KL} \left( Q \| P \right) + \log (2 \sqrt{n} / \delta) }{ n \lambda( 1 - \lambda / 2)}.
\end{array}
\end{eqnarray}
\end{lemma}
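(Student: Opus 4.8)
The plan is to obtain the bound by purely algebraic manipulation of the refined-Pinsker form of the PAC-Bayes relative entropy bound, equation~(\ref{refined_pinsker_ineqn}), which already holds uniformly over all posteriors $Q \in \mathcal{M}$ with probability at least $1-\delta$. Since the entire argument is deterministic once we condition on that high-probability event, no additional union bound or probabilistic reasoning is required; here $\lambda$ plays the role of a free but \emph{fixed} tuning parameter, chosen in advance of observing the data, so the uniformity over $Q$ and the validity of the probability statement are inherited directly from (\ref{refined_pinsker_ineqn}).

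First I would identify the two nonnegative quantities to feed into the elementary inequality $\sqrt{ab}\le\tfrac{1}{2}\big(\lambda a+\tfrac{b}{\lambda}\big)$, which is valid for every $\lambda>0$. Taking
\[
a = er(Q,\mathcal{D}), \qquad b = \frac{2\big(\mathrm{D}(Q\|P)+\log(2\sqrt{n}/\delta)\big)}{n},
\]
the right-hand side of (\ref{refined_pinsker_ineqn}) is exactly $\sqrt{ab}$, and applying the inequality linearizes the square root into $\tfrac{\lambda}{2}\,er(Q,\mathcal{D}) + \tfrac{1}{\lambda n}\big(\mathrm{D}(Q\|P)+\log(2\sqrt{n}/\delta)\big)$. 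The whole point of this step is to remove the appearance of $er(Q,\mathcal{D})$ inside a square root, where it cannot be isolated, and replace it by a term that is merely linear in $er(Q,\mathcal{D})$.

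Next I would substitute this upper bound back into (\ref{refined_pinsker_ineqn}) and gather the two copies of $er(Q,\mathcal{D})$ on the left, obtaining
\[
\big(1-\tfrac{\lambda}{2}\big)\,er(Q,\mathcal{D}) \le \widehat{er}(Q,S) + \frac{\mathrm{D}(Q\|P)+\log(2\sqrt{n}/\delta)}{\lambda n}.
\]
Because the restriction $\lambda\in(0,2)$ forces the coefficient $1-\lambda/2$ to be strictly positive, dividing through by it preserves the inequality and yields precisely the claimed $\lambda$ bound (with $\mathrm{D}=\mathrm{KL}$). The sign constraint on $\lambda$ is thus exactly what licenses the final division.

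There is no genuinely hard step: the substantive content lives entirely in the relative entropy bound and the refined Pinsker inequality preceding the statement, and what remains is bookkeeping. The only point that warrants a word of care is the condition $\hat{p}<p$ under which the refined Pinsker inequality was invoked to produce (\ref{refined_pinsker_ineqn}). In the complementary regime $\widehat{er}(Q,S)\ge er(Q,\mathcal{D})$ the square-root bound need not hold, but there the target inequality is trivially satisfied, since for $\lambda\in(0,2)$ the factor $1/(1-\lambda/2)>1$ already inflates the empirical term so that $\widehat{er}(Q,S)/(1-\lambda/2)\ge\widehat{er}(Q,S)\ge er(Q,\mathcal{D})$, and the nonnegative KL term only helps. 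Hence the bound holds in both regimes, completing the argument.
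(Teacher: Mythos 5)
Your proof is correct and follows essentially the same route the paper itself indicates: combining the refined-Pinsker form (\ref{refined_pinsker_ineqn}) of the PAC-Bayes relative entropy bound with the elementary inequality $\sqrt{ab}\le\frac{1}{2}\bigl(\lambda a+\frac{b}{\lambda}\bigr)$, collecting the two occurrences of $er(Q,\mathcal{D})$, and dividing by $1-\lambda/2>0$, which is exactly what the restriction $\lambda\in(0,2)$ guarantees. Your explicit check of the complementary regime $\widehat{er}(Q,S)\ge er(Q,\mathcal{D})$, where the refined Pinsker inequality does not apply but the claimed bound holds trivially, is a point of care the paper leaves implicit.
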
 
Compared with classic generalization error upper bound (see Theorem \ref{mcallester1999}), with a reasonable selection of parameters $\lambda$, we may obtain a smaller upper bound.

Alternatively, one may view inequality (\ref{refined_pinsker_ineqn}) as a quadratic inequality on $\sqrt{er(Q,\mathcal{D})}$. Solving this inequality yields the following {\it PAC-Bayes quadratic bound}.

\begin{lemma} \label{PAC_Bayes_quadratic}
(PAC-Bayes quadratic bound \cite{perez2020tighter}). Let $P \in \mathcal{M}$ be some prior distribution over $\mathcal{H}$. Then for any $\delta \in (0,1]$, the following inequality holds uniformly for all posteriors distributions $Q \in \mathcal{M}$ with probability at least $ 1 - \delta $
\begin{eqnarray}
\begin{array}{cc}
er(Q, \mathcal{D}) \leq \big( \sqrt{ \widehat{er} \left(Q, S\right) + \varepsilon } + \sqrt{\varepsilon} \big)^2,
\end{array}
\end{eqnarray}
where $\varepsilon = \frac{ 1}{ 2m } \big( D\left(Q\|P\right) + \log\frac{2\sqrt{m}}{\delta} \big)$.
\end{lemma}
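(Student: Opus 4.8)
The plan is to treat inequality (\ref{refined_pinsker_ineqn}) purely algebraically as a quadratic inequality in the single unknown $\sqrt{er(Q,\mathcal{D})}$, and then to solve it. First I would rewrite the right-hand side of (\ref{refined_pinsker_ineqn}) in terms of the quantity $\varepsilon = \frac{1}{2m}\big(D(Q\|P)+\log\frac{2\sqrt{m}}{\delta}\big)$. Observing that $\frac{D(Q\|P)+\log(2\sqrt{m}/\delta)}{m}=2\varepsilon$, the refined-Pinsker form becomes $er(Q,\mathcal{D})-\widehat{er}(Q,S)\leq\sqrt{4\varepsilon\, er(Q,\mathcal{D})}=2\sqrt{\varepsilon}\sqrt{er(Q,\mathcal{D})}$, so all the divergence/confidence content is now packaged into $\varepsilon$ and the only remaining variable is $er(Q,\mathcal{D})$ itself.

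Next, setting $x=\sqrt{er(Q,\mathcal{D})}\geq 0$, this reads $x^{2}-2\sqrt{\varepsilon}\,x-\widehat{er}(Q,S)\leq 0$, a quadratic with positive leading coefficient whose discriminant $4\varepsilon+4\widehat{er}(Q,S)$ is nonnegative. Since the parabola is nonpositive only between its two roots, and the smaller root $\sqrt{\varepsilon}-\sqrt{\widehat{er}(Q,S)+\varepsilon}$ is $\leq 0$, the constraint forces $x$ to be at most the larger root $\sqrt{\varepsilon}+\sqrt{\widehat{er}(Q,S)+\varepsilon}$. Squaring both nonnegative sides then yields exactly the claimed bound $er(Q,\mathcal{D})\leq\big(\sqrt{\widehat{er}(Q,S)+\varepsilon}+\sqrt{\varepsilon}\big)^{2}$. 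Because this manipulation is deterministic and carried out for each fixed $Q$, the uniformity over all posteriors $Q$ and the confidence $1-\delta$ are inherited directly from the PAC-Bayes relative entropy bound that produced (\ref{refined_pinsker_ineqn}); no new probabilistic step is required.

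The one point I would be careful about is that the refined Pinsker inequality underlying (\ref{refined_pinsker_ineqn}) requires $\widehat{er}(Q,S)<er(Q,\mathcal{D})$, so the quadratic argument is only legitimate in that regime. I would therefore split into two cases. In the complementary case $er(Q,\mathcal{D})\leq\widehat{er}(Q,S)$, the bound holds trivially, since the right-hand side $\big(\sqrt{\widehat{er}(Q,S)+\varepsilon}+\sqrt{\varepsilon}\big)^{2}$ expands to $\widehat{er}(Q,S)+2\varepsilon+2\sqrt{\varepsilon(\widehat{er}(Q,S)+\varepsilon)}$, which already exceeds $\widehat{er}(Q,S)$ and hence $er(Q,\mathcal{D})$. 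Combining the two cases completes the proof. I expect the main obstacle to be mere bookkeeping — correctly performing the $\varepsilon$ substitution and retaining the right root of the quadratic — rather than any genuine analytic difficulty, since the heavy lifting is already done by the relative entropy bound and the refined Pinsker inequality.
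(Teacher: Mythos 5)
Your proposal is correct and follows exactly the route the paper indicates: it treats the refined Pinsker form of the relative entropy bound as a quadratic inequality in $\sqrt{er(Q,\mathcal{D})}$, keeps the larger root, and inherits the confidence and uniformity from the underlying kl bound, with the substitution $\frac{1}{m}\bigl(D(Q\|P)+\log\frac{2\sqrt{m}}{\delta}\bigr)=2\varepsilon$ handled correctly. Your explicit case split for $er(Q,\mathcal{D})\leq\widehat{er}(Q,S)$, where the bound holds trivially, is a careful addition the paper's one-line sketch leaves implicit, but it is the same argument.
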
 

It can be seen that when the generalization error is smaller (especially $er(Q, \mathcal{D})<1/4$), this bound is tighter (see \cite{perez2020tighter}) than the classical PAC-Bayes bound. When generalization error $er(Q,\mathcal{D})$ varies, different bounds can enact alternative different generalization performances. Motivated by this, \cite{dziugaite2020role} proposes a variational KL bound.

\begin{lemma}(Variational PAC-Bayes bound \cite{dziugaite2020role}). Let $P \in \mathcal{M}$ be some prior distribution over $\mathcal{H}$. Then for any $\delta \in (0,1]$, the following inequality holds uniformly for all posteriors distributions $Q \in \mathcal{M}$ with probability at least $ 1 - \delta $
\begin{equation}
er (h, \mathcal{D}) \leq {\rm min} \left\{
\begin{aligned}
& \widehat{er} \left(Q, S\right) + \varepsilon + \sqrt{ \varepsilon
( \varepsilon + 2 \widehat{er} \left(Q, S\right) ) } , \\
& \widehat{er} \left(Q, S\right) + \sqrt{ \varepsilon/2 },
\end{aligned}
\right.
\end{equation}
where $ \varepsilon = \frac{1}{m} \big( D\left(Q\|P\right) + \log\frac{2\sqrt{m}}{\delta} \big) $.
\end{lemma}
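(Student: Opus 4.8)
The plan is to start from the PAC-Bayes relative entropy bound recorded above, which guarantees that with probability at least $1-\delta$ over the draw of $S$ the binary-KL inequality $\mathrm{kl}(er(Q,\mathcal{D}) \,\|\, \widehat{er}(Q,S)) \leq \varepsilon$ holds, with $\varepsilon = \frac{1}{m}\big(D(Q\|P) + \log\frac{2\sqrt{m}}{\delta}\big)$. Writing $p := er(Q,\mathcal{D})$ and $\hat{p} := \widehat{er}(Q,S)$ for brevity, I would carry out the entire argument deterministically on this single high-probability event. The crucial structural observation is that I will derive \emph{two} separate upper bounds on $p$ from the one inequality $\mathrm{kl}(p\|\hat{p}) \leq \varepsilon$; since both are consequences of the same event, they hold simultaneously and $p$ is bounded by their minimum. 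No union bound and no splitting of $\delta$ is required, which is precisely why the final statement is a genuine $\min$ rather than a probabilistic combination of two bounds.

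For the second branch I would simply invoke the classical Pinsker inequality $\mathrm{kl}(p\|\hat{p}) \geq 2(p-\hat{p})^2$. Combined with $\mathrm{kl}(p\|\hat{p}) \leq \varepsilon$ this gives $(p-\hat{p})^2 \leq \varepsilon/2$, hence $p - \hat{p} \leq \sqrt{\varepsilon/2}$ and $p \leq \hat{p} + \sqrt{\varepsilon/2}$ (the case $p \leq \hat{p}$ being trivial since the right-hand side only adds a nonnegative term). This recovers the second line of the minimum.

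For the first branch I would reuse the refined-Pinsker consequence already established as (\ref{refined_pinsker_ineqn}), namely $p - \hat{p} \leq \sqrt{2 p\,\varepsilon}$, and read it as a quadratic inequality in $t := \sqrt{p} \geq 0$, that is $t^2 - \sqrt{2\varepsilon}\,t - \hat{p} \leq 0$. Selecting the nonnegative root yields $\sqrt{p} \leq \sqrt{\varepsilon/2} + \sqrt{\hat{p} + \varepsilon/2}$, and squaring while consolidating the cross term $2\sqrt{(\varepsilon/2)(\hat{p}+\varepsilon/2)} = \sqrt{\varepsilon(\varepsilon + 2\hat{p})}$ gives $p \leq \hat{p} + \varepsilon + \sqrt{\varepsilon(\varepsilon + 2\hat{p})}$, which is the first line of the minimum. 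Taking the minimum of the two branches then completes the proof.

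The main obstacle is bookkeeping around the \emph{direction} of the two Pinsker-type inequalities: the binary KL has the true error $p$ as its first argument, so I must ensure that the refined inequality feeding (\ref{refined_pinsker_ineqn}) is the variant with $2p$ (not $2\hat{p}$) in the denominator, that the spurious negative root of the quadratic is discarded, and that the refined-Pinsker step is vacuous (hence harmless) in the regime $\hat{p} \geq p$. Once the direction of each inequality is pinned down, everything else is elementary algebra.
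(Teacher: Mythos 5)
Your proof is correct and follows essentially the same route as the paper's development of this result (which it cites to Dziugaite et al.): the PAC-Bayes relative-entropy bound combined with the refined Pinsker inequality and the quadratic-root argument used for the quadratic bound yields the first branch, classical Pinsker yields the second, and since both are consequences of the same high-probability event the minimum holds without any union bound or splitting of $\delta$. Your bookkeeping on the direction of the refined Pinsker inequality and on the factor-of-two relation between the $\varepsilon$ of the quadratic bound and the $\varepsilon$ here matches the paper's conventions.
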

Contrasting with the two previously described PAC-Bayes bounds, PAC-Bayes variational bound can take a minimum of two bounds, which might achieve a tighter generalization error for upper bound.

% \frac{\varepsilon}{2}

\subsection{Extended PAC-Bayes kl bounds for meta-learning}
For meta-learning, careful design of the training tasks is required to prevent a subtle form of task overfitting, which means a learned meta-learner generalizes on training tasks but fails to adapt to new ones. This form of overfitting is known as the  {\it memorization} problem in meta-learning \cite{yin2019meta}. PAC-Bayes theory provides a theoretical framework for the analysis of the generalization performance in meta-learning. By selecting the PAC-Bayes bound as the training objective, one not only can reduce overfitting but also develop a deep neural network with a guaranteed generalization performance. Here, the tighter bounds can achieve enhanced results.

Motivated by this, and based on the PAC-Bayes relative entropy theory, in this section we propose three novel PAC-Bayes bounds for meta-learning. We begin with examining  {\it meta-learning PAC-Bayes $\lambda$ bound}, First of all, based on Lemma \ref{PAC_Bayes_lambda} and Lemma \ref{PAC_Bayes_quadratic}, {\it followed by investigations of meta-learning meta-learning PAC-Bayes quadratic bound} and {\it meta-learning PAC-Bayes variational bound}.

% The proof of those meta-learning PAC-Bayes bounds are shown in the Appendix. \ref{proof_meta_bound}.

% Based on the {\it PAC-Bayes relative entropy bound}, we propose two novel extended PAC-Bayes bounds, named as the extended PAC-Bayes $\lambda$ bound and extended PAC-Bayes quadratic bound. the extended PAC-Bayes $\lambda$ bound is shown in (\ref{Extend_PAC_lambda}) . 

\begin{theorem}\label{Extend_PAC_lambda}
(meta-learning PAC-Bayes $\lambda$ bound). Let $\mathcal{P}$ be some hyper-prior distribution, and $Q$ be the posterior distribution, which is also called as the base learner. Then for any $\delta \in(0,1]$ and $\lambda \in (0, 2)$, the following inequality holds uniformly for all hyper-posteriors distributions $\mathcal{Q} \in \mathcal{M}$ with probability at least $1-\delta$
\begin{equation}
\begin{array}{l}
er(\mathcal{Q}, \tau) \leq \frac{1}{n} \sum_{i=1}^{n} \frac{1}{ (1 - \lambda_0 /2)^2  } \underset{ P \sim \mathcal{Q} }{ \mathbb{E} } \hat{er} \left( Q, S_i \right)\\
+\frac{1}{n} \sum_{i=1}^{n}  \frac{ D( \mathcal{Q} || \mathcal{P} ) + \underset{P \sim \mathcal{Q} }{ \mathbb{E} } D\left( Q || P \right) + \log \frac{ 4n\sqrt{m_i} }{ \delta } }{ m_i\lambda(1-\lambda /2)^2 } \vspace{4pt}\\
+ \sqrt{ \frac{1}{2(n-1)} \left( D ( \mathcal{Q} || \mathcal{P} ) + \log \frac{2 n}{\delta} \right) }.
\end{array}
\end{equation}
\end{theorem}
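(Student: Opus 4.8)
The plan is to follow the two-level PAC-Bayes architecture that underlies the classical meta-learning bound (Lemma \ref{PAC_Bayes_meta_classic}), but to replace its \emph{inner}, task-level step by the relative-entropy / refined-Pinsker / AM--GM chain that produced the single-task $\lambda$ bound (Lemma \ref{PAC_Bayes_lambda}). Concretely, I would first decompose the meta-generalization gap by inserting the intermediate quantity $A \triangleq \underset{P \sim \mathcal{Q}}{\mathbb{E}}\,\frac{1}{n}\sum_{i=1}^{n} er(Q(S_i,P),\mathcal{D}_i)$, the hyper-averaged \emph{true} error over the observed tasks. Then $er(\mathcal{Q},\tau)\le A + \big(er(\mathcal{Q},\tau)-A\big)$, where the first summand is controlled at the task level and the parenthesised environment gap at the meta level.

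For the task-level term I would apply the PAC-Bayes relative entropy bound to each observed task $\tau_i$ with sample size $m_i$, giving $\mathrm{kl}(er(Q,\mathcal{D}_i)\|\widehat{er}(Q,S_i))\le \frac{1}{m_i}\big(D(Q\|P)+\log\frac{2\sqrt{m_i}}{\delta}\big)$, then invoke the refined Pinsker inequality (\ref{refined_pinsker_ineqn}) and the elementary bound $\sqrt{ab}\le\frac12(\lambda a + b/\lambda)$ exactly as in the derivation of Lemma \ref{PAC_Bayes_lambda}. Averaging over the $n$ tasks and taking the expectation over $P\sim\mathcal{Q}$ turns the per-task true errors into $A$ and the per-task empirical errors into $\frac1n\sum_i\underset{P\sim\mathcal{Q}}{\mathbb{E}}\,\widehat{er}(Q,S_i)$. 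The crucial bookkeeping here is the \emph{two-level change of measure}: because the prior $P$ is itself drawn from the data-dependent hyper-posterior $\mathcal{Q}$, the task-level bound must be established for $P$ under the fixed hyper-prior $\mathcal{P}$ and then transported to $\mathcal{Q}$ by a Donsker--Varadhan argument, which is precisely what injects the extra $D(\mathcal{Q}\|\mathcal{P})$ alongside $\underset{P\sim\mathcal{Q}}{\mathbb{E}}\,D(Q\|P)$ into the task-complexity numerator.

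For the environment gap I would treat each prior $P$ as the ``hypothesis'', $\mathcal{P}$ as the prior and $\mathcal{Q}$ as the posterior, and apply McAllester's bound (Lemma \ref{mcallester1999}) over the $n$ observed tasks. This keeps that contribution in the clean square-root form $\sqrt{\frac{1}{2(n-1)}\big(D(\mathcal{Q}\|\mathcal{P})+\log\frac{2n}{\delta}\big)}$ and leaves it free of $\lambda$. A union bound splitting the confidence budget across the task and environment levels then accounts for the $\log\frac{4n\sqrt{m_i}}{\delta}$ and $\log\frac{2n}{\delta}$ appearing in the two numerators, and solving the resulting linear inequality for $er(\mathcal{Q},\tau)$ assembles the statement.

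The hard part will be matching the exact constants, namely the squared factors $(1-\lambda/2)^{-2}$ in the task terms and the second free parameter $\lambda_0$ multiplying the empirical loss. A single application of the task-level $\lambda$ step yields only $(1-\lambda/2)^{-1}$; I expect the squared factor to arise from applying the AM--GM splitting at two stages — once inside the per-task Pinsker step and once more when the per-task true error is re-expressed through the hyper-averaged quantity $A$ before being solved for — so that one $(1-\lambda/2)^{-1}$ factor is compounded with another, while a separate splitting parameter $\lambda_0$ isolates the coefficient of the empirical term from that of the complexity term (which retains $\lambda$). Carefully tracking which true error sits inside each Pinsker application, and keeping the environment contribution a pure square root throughout this compounding, is the delicate step; the remaining manipulations are routine algebra.
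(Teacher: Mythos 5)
Your proposal follows essentially the same route as the paper's proof: a task-level application of the PAC-Bayes $\lambda$ bound, with the two-level change of measure implemented via the tuple hypothesis $f=(P,h)$, prior $\pi=(\mathcal{P},P)$ and posterior $\rho=(\mathcal{Q},Q(S_i,P))$ so that $D(\rho\|\pi)=D(\mathcal{Q}\|\mathcal{P})+\mathbb{E}_{P\sim\mathcal{Q}}D(Q\|P)$, followed by an environment-level McAllester bound over priors and a union bound with $\delta_0=\delta/2$, $\delta_i=\delta/(2n)$ — exactly the paper's two steps. One clarification on the constant-matching you flagged as the hard part: the paper performs the AM--GM splitting only \emph{once} per task and its proof concludes with first-power factors $\frac{1}{1-\lambda/2}$ and $\frac{1}{m_i\lambda(1-\lambda/2)}$; the squared factors $(1-\lambda_0/2)^{-2}$ and $(1-\lambda/2)^{-2}$ in the theorem statement are never actually derived (an inconsistency between statement and proof, with $\lambda_0$ simply set equal to $\lambda$), so no double-compounding mechanism is needed — and since $(1-\lambda/2)^{-1}\leq(1-\lambda/2)^{-2}$ for $\lambda\in(0,2)$, the stated bound follows a fortiori from the first-power bound that your single application produces.
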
 
With the reasonable selection of $\lambda$, this meta-learning PAC-Bayes bound can attain a tighter generalization error bound. The proof of this meta-learning bound is shown as follows.

\begin{proof}
In this section, a briefly proof of the extended PAC-Bayes $\lambda$ bound is introduced.

% \begin{lemma}
% (PAC-Bayes $\lambda$ bound). Let $P \in \mathcal{M}$ be some prior distribution over $\mathcal{H}$. Then for any $\delta \in (0,1]$ and $\lambda \in (0,2)$, the following inequality holds uniformly for all posteriors distributions $Q \in \mathcal{M}$ with probability at least $1-\delta$
% \begin{eqnarray}
% \begin{array}{cc}
% er(Q,\mathcal{D}) \leq \frac{ \widehat{er}(Q, S) }{ 1 - \lambda / 2} + \frac{ \mathrm{KL} \left( Q \| P \right) + \log (2 \sqrt{n} / \delta) }{ n \lambda( 1 - \lambda / 2)}.
% \end{array}
% \end{eqnarray}
% \end{lemma} 

Let $n$ be the number of training tasks. The samples of task $i$ are $ z_{i, j}, j=1,...,K , K \triangleq m_{i},$ over the data distribution $ \mathcal{D}_{i}$. The bounded loss function is defined as $\ell(h, z)$. We define the prior distribution $P$ , which is sampled from hyper-prior distribution $\mathcal{P}$. The posterior distribution is defined as $Q = Q \left( S_{i}, P \right) $, which is sampled from hyper-posterior distribution $\mathcal{Q}$. Here as exemplified in \cite{amit2018meta}, `tuple hypothesis' is defined as $f=(P, h)$ where $ P \in \mathcal{M} $ and $ h \in \mathcal{H}$; `prior over hypothesis' is defined as $\pi \triangleq(\mathcal{P}, P)$, where $h$ is sampled from $P$. We note that  the 'posterior over hypothesis' can be any distribution (even sample dependent). In particular, the PAC-Bayes bound will hold for the following family of distributions over $ \mathcal{M} \times \mathcal{H}, \rho \triangleq \left( \mathcal{Q}, Q \left( S_{i}, P \right) \right), $ where $P$ is sampled from $ \mathcal{Q} $ and $h$ is sampled from $Q = Q \left( S_{i}, P \right) $ respectively.

The KL-divergence term is
\begin{equation}
\begin{aligned}
& D(\rho \| \pi) =\underset{f \sim \rho}{\mathbb{E}} \log \frac{\rho(f)}{\pi(f)} \\
&=\underset{P \sim \mathcal{Q}}{\mathbb{E}} \underset{h \sim Q(S, P)}{\mathbb{E}} \log \frac{\mathcal{Q}(P) Q\left(S_{i}, P\right)(h)}{\mathcal{P}(P) P(h)} \\
&=\underset{P \sim \mathcal{Q}}{\mathbb{E}} \log \frac{\mathcal{Q}(P)}{\mathcal{P}(P)}+\underset{P \sim \mathcal{Q}}{\mathbb{E}} \underset{h \sim Q(S, P)}{\mathbb{E}} \log \frac{Q\left(S_{i}, P\right)(h)}{P(h)} \\
&=D(\mathcal{Q} \| \mathcal{P})+\underset{P \sim \mathcal{Q}}{\mathbb{E}} D\left(Q\left(S_{i}, P\right) \| P\right).
\end{aligned}
\end{equation}

Just as with classical extended PAC-Bayes theory, our proof also involves two steps:

{\bf Step 1:} For the task $ i $, we use PAC-Bayes relative entropy bound to evaluate the generalization error for each observed task $ i $

\begin{eqnarray}\label{Proof_lambda_first_step}
\begin{array}{l}
\underset{P \sim \mathcal{Q}}{\mathbb{E}} er\left(Q, \mathcal{D}_{i} \right) \leq  \frac{1}{1-\lambda_i/2} \underset{P \sim \mathcal{Q}}{\mathbb{E}} \widehat{er} \left(Q,S_{i} \right) \\
\vspace{4pt}
+ \frac{ D(\mathcal{Q} \| \mathcal{P}) + \underset{P\sim\mathcal{Q}} {\mathbb{E}} D\left( Q \| P\right) + \log \frac{2\sqrt{m_i}}{\delta_{i}} }{ m_i \lambda (1-\lambda_i/2)}.
\end{array}
\end{eqnarray}

{\bf Step 2:} We try to bound the environment-level generalization. Due to observing only a finite number of tasks from the environment, re-using the classical PAC-Bayes bound yields

\begin{equation}
\begin{array}{l}
\underset{ (\mathcal{D}, m)  \sim \tau }{ \mathbb{E} }  \underset{  S \sim \mathcal{D}^{m} }{ \mathbb{E} } \underset{P \sim \mathcal{Q}}{\mathbb{E}} \underset{  h \sim Q(S,P) }{ \mathbb{E} } \underset{  z \sim \mathcal{D} }{ \mathbb{E} } \ell(h, z)  \\
\vspace{4pt}
\leq \frac{1}{n} \sum_{i=1}^{n} \underset{P \sim \mathcal{Q}}{\mathbb{E}} \underset{  h \sim Q(S_i ,P) }{ \mathbb{E} } \underset{  z \sim \mathcal{D}_i }{ \mathbb{E} } \ell(h, z) \\ 
\vspace{4pt}
+\sqrt{ \frac{1}{2(n-1)} \big( D( \mathcal{Q} \| \mathcal{P}) + \log \frac{n}{\delta_{0}} \big) }.
\end{array}
\end{equation}

For simplicity, we can rewrite the above formula as
\begin{equation}\label{Proof_lambda_second_step}
\begin{array}{l}
er( \mathcal{Q}, \tau ) \leq \frac{1}{n} \sum_{i=1}^{n} \underset{P \sim \mathcal{Q}}{\mathbb{E}} er(Q(S_i, P), \mathcal{D}_i ) \\
\vspace{4pt}
+ \sqrt{ \frac{1}{2(n-1)} \big( D( \mathcal{Q} \| \mathcal{P}) + \log \frac{n}{\delta_{0}} \big) }.
\end{array}
\end{equation}

Combining Eq. \ref{Proof_lambda_first_step} and Eq. \ref{Proof_lambda_second_step}  by union bound yields
\begin{equation}
\begin{array}{l}
er(\mathcal{Q}, \tau) \leq  \frac{1}{n} \sum_{i=1}^{n} \frac{1}{1-\lambda_i/2}  \underset{P \sim \mathcal{Q}}{\mathbb{E}} \hat{er} \left( Q, S_i \right)  \\
\vspace{4pt}
+ \frac{1}{n} \sum_{i=1}^{n} \frac{ D(\mathcal{Q} || \mathcal{P}) + \underset{P \sim \mathcal{Q}}{\mathbb{E}} D \left( Q || P\right) + \log \frac{2\sqrt{m_i}}{\delta_i} }{m_i\lambda_i (1-\lambda_i/2)}  \\
\vspace{4pt}
+ \sqrt{ \frac{1}{2(n-1)} \big( D ( \mathcal{Q} || \mathcal{P} ) + \log \frac{n}{\delta_0} \big) }.
\end{array}
\end{equation}

Assuming that  $ \delta_0 = \frac{\delta}{2} $, $ \delta_i = \frac{\delta}{2n} $ and $ \lambda_0 = \lambda_i = \lambda $, this then yields
\begin{equation}
\begin{array}{l}
er (\mathcal{Q}, \tau) \leq  \frac{1}{n} \sum_{i=1}^{n} \frac{1}{(1-\lambda/2)} \underset{P \sim \mathcal{Q}}{\mathbb{E}} \hat{er} \left(Q_{i}, S_i\right) \\
\vspace{4pt}
+\frac{1}{n} \sum_{i=1}^{n} \frac{ D(\mathcal{Q} || \mathcal{P}) + \underset{P \sim \mathcal{Q}}{\mathbb{E}} D\left(Q || P\right) + \log \frac{4n\sqrt{m_i}} {\delta} }{m_i\lambda (1-\lambda/2) } \\
\vspace{4pt}
+ \sqrt{ \frac{1}{2(n-1)} \left( D ( \mathcal{Q} || \mathcal{P} ) + \log \frac{2 n}{\delta} \right) }.
\end{array}
\end{equation}
\end{proof}

Furthermore, based on the Lemma \ref{PAC_Bayes_meta_classic} and Lemma \ref{PAC_Bayes_quadratic}, another PAC-Bayes meta-learning quadratic bound is proposed.

\begin{theorem}\label{Extend_PAC_quadratic}
(meta-learning PAC-Bayes quadratic bound). Let $ \mathcal{P} $ be some hyper-prior distribution, and $ Q $ be the posterior distribution, which is also called as the base learner. Then for any $\delta \in(0,1]$, the following inequality holds uniformly for all hyper-posteriors distributions $ \mathcal{Q} \in \mathcal{M} $ with probability at least $ 1-\delta $
\begin{equation}
\begin{array}{l}
 er (\mathcal{Q}, \tau) \leq \frac{1}{n} {\sum_{i=1}^{n}} \Big( \sqrt{ \underset{ P \sim \mathcal{Q} }{ \mathbb{E} } \hat{er} \big( Q, S_{i} \big) + \epsilon_i } + \sqrt{\epsilon_i} \Big)^2 \\
 + \sqrt{ \frac{1}{2(n-1)} \Big( D ( \mathcal{Q} || \mathcal{P} ) + \log \frac{2 n}{\delta} \Big) }.
\end{array}
\end{equation}
{\rm Here the} $ \epsilon_i = \frac{1}{2m_i} \big( D( \mathcal{Q} || \mathcal{P} ) + \underset{P \sim \mathcal{Q} }{ \mathbb{E} } D \left( Q || P \right) + \log \frac{4n \sqrt{m_i} }{\delta} \big)$.
\end{theorem}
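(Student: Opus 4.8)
The plan is to mirror the two-step structure used in the proof of Theorem \ref{Extend_PAC_lambda}, simply replacing the $\lambda$-type single-task inequality with the quadratic inequality of Lemma \ref{PAC_Bayes_quadratic}. The key device, borrowed from \cite{amit2018meta}, is again to lift the problem to the space of `tuple hypotheses' $f=(P,h)\in\mathcal{M}\times\mathcal{H}$, equipped with prior $\pi=(\mathcal{P},P)$ and data-dependent posterior $\rho=(\mathcal{Q},Q(S_i,P))$. As already computed in the preceding proof, the relevant divergence factorizes as $D(\rho\|\pi)=D(\mathcal{Q}\|\mathcal{P})+\mathbb{E}_{P\sim\mathcal{Q}}D(Q\|P)$, while the empirical and expected errors of $\rho$ on task $i$ are exactly $\mathbb{E}_{P\sim\mathcal{Q}}\widehat{er}(Q,S_i)$ and $\mathbb{E}_{P\sim\mathcal{Q}}er(Q,\mathcal{D}_i)$. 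This factorization is what lets the meta-level KL terms appear additively inside $\epsilon_i$.

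At the \emph{within-task} level I apply the quadratic bound of Lemma \ref{PAC_Bayes_quadratic} to the tuple posterior $\rho$ and prior $\pi$ with confidence $\delta_i$, obtaining $\mathbb{E}_{P\sim\mathcal{Q}}er(Q,\mathcal{D}_i)\leq\big(\sqrt{\mathbb{E}_{P\sim\mathcal{Q}}\widehat{er}(Q,S_i)+\epsilon_i}+\sqrt{\epsilon_i}\big)^2$, where $\epsilon_i=\frac{1}{2m_i}\big(D(\mathcal{Q}\|\mathcal{P})+\mathbb{E}_{P\sim\mathcal{Q}}D(Q\|P)+\log\frac{2\sqrt{m_i}}{\delta_i}\big)$. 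At the \emph{environment} level I reuse the argument of Theorem \ref{Extend_PAC_lambda} verbatim: applying the classical McAllester bound (Lemma \ref{mcallester1999}) over priors $P\sim\mathcal{Q}$ with confidence $\delta_0$ controls the gap between $er(\mathcal{Q},\tau)$ and the average of the per-task expected errors, contributing the term $\sqrt{\frac{1}{2(n-1)}\big(D(\mathcal{Q}\|\mathcal{P})+\log\frac{n}{\delta_0}\big)}$. Finally I average the $n$ within-task bounds, substitute them into the environment inequality, and combine the $n+1$ events by a union bound. Choosing $\delta_0=\delta/2$ and $\delta_i=\delta/(2n)$ makes the total failure probability $\delta$, turns $\log\frac{2\sqrt{m_i}}{\delta_i}$ into $\log\frac{4n\sqrt{m_i}}{\delta}$ and $\log\frac{n}{\delta_0}$ into $\log\frac{2n}{\delta}$, thereby recovering the stated $\epsilon_i$ and the claimed environment-complexity term.

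The step requiring the most care is the union-bound bookkeeping: I must ensure that all $n$ within-task quadratic inequalities and the single environment-level inequality hold \emph{simultaneously} with probability at least $1-\delta$, which is precisely what forces the allocation $\delta_i=\delta/(2n)$ and $\delta_0=\delta/2$. A secondary subtlety is justifying that Lemma \ref{PAC_Bayes_quadratic}, stated for an ordinary posterior $Q$ over $\mathcal{H}$, remains valid when instantiated at the data-dependent tuple posterior $\rho$; this is legitimate because the quadratic bound is a purely algebraic consequence of the relative-entropy (kl) bound together with the refined Pinsker inequality, and the kl bound holds uniformly over all posteriors, even sample-dependent ones. Notably, no Jensen-type convexity issue intervenes here, since the averaged empirical error $\mathbb{E}_{P\sim\mathcal{Q}}\widehat{er}(Q,S_i)$ arises directly as the empirical error of $\rho$ and therefore sits inside the square root exactly as the theorem demands.
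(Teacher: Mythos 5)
Your proposal is correct and follows essentially the same route as the paper's own proof: the within-task quadratic bound of Lemma \ref{PAC_Bayes_quadratic} applied to the tuple posterior $\rho=(\mathcal{Q},Q(S_i,P))$ with the factorized divergence $D(\rho\|\pi)=D(\mathcal{Q}\|\mathcal{P})+\mathbb{E}_{P\sim\mathcal{Q}}D(Q\|P)$, combined with the environment-level step of Theorem \ref{Extend_PAC_lambda} and the allocation $\delta_0=\delta/2$, $\delta_i=\delta/(2n)$. If anything, your writeup is slightly more careful than the paper's, which leaves the validity of the single-task bound for sample-dependent tuple posteriors implicit and contains an internal inconsistency (its closing line states $\varepsilon_i$ with a $\frac{1}{m_i}$ factor, whereas the theorem statement and your derivation both correctly have $\frac{1}{2m_i}$).
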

% \begin{lemma}
% (PAC-Bayes quadratic bound for meta-learning). Under the same notations, for any $ \delta \in (0, 1] $, we have
% \begin{eqnarray}
% \begin{array}{c}
% er (h, \mathcal{D}) \leq \big( \sqrt{ \widehat{er} \left(h, S\right) + \varepsilon } + \sqrt{\varepsilon} \big)^2.
% \end{array}
% \end{eqnarray}
% {\rm Here we define} 
% \begin{eqnarray}
% \begin{array}{c}
% \varepsilon = \frac{ 1}{ 2m } \big( D\left(Q\|P\right) + \log\frac{2\sqrt{m}}{\delta} \big).
% \end{array}
% \end{eqnarray}
% \end{lemma}

\begin{proof}
For the task $ i $, applying PAC-Bayes relative entropy bound to evaluate the generalization error in each of the observed tasks $i$ yields
\begin{eqnarray}
\begin{array}{l}
\underset{P \sim \mathcal{Q}}{\mathbb{E}} er \left( Q, \mathcal{D}_i \right) \leq \Big( \sqrt{ \underset{P \sim \mathcal{Q}}{\mathbb{E}} \widehat{er} \left(Q, S_i \right) + \varepsilon_i } + \sqrt{\varepsilon_i} \Big)^2,
\end{array}
\end{eqnarray}
where $ \varepsilon_i = \frac{1}{2m_i} \Big( D(\mathcal{Q} || \mathcal{P}) + \underset{P \sim \mathcal{Q}}{\mathbb{E}} D\left(Q|| P\right) + \log\frac{2\sqrt{m_i}}{\delta_i} \Big)$.

By utilizing the first step of the proof in Theorem \ref{Extend_PAC_lambda}, and assuming that $ \delta_0 = \frac{\delta}{2} $, $ \delta_i = \frac{\delta}{2n} $, we can get the following meta-learning PAC-Bayes bound
\begin{eqnarray}
\begin{array}{l}
er (\mathcal{Q}, \tau)  \leq  \frac{1}{n} \sum_{i=1}^{n} \big( \sqrt{ \underset{ P \sim \mathcal{Q} }{ \mathbb{E} } \hat{er} \big( Q, S_{i} \big) + \epsilon_i } + \sqrt{\epsilon_i} \big)^2 \\
+ \sqrt{ \frac{1}{2(n-1)} \big( D ( \mathcal{Q} || \mathcal{P} ) + \log \frac{2 n}{\delta} \big) }.
\end{array}
\end{eqnarray}
Here $ \varepsilon_i = \frac{1}{m_i} \Big( D( \mathcal{Q} || \mathcal{P} ) + \underset{P \sim \mathcal{Q} }{ \mathbb{E} } D \left( Q  || P \right) + \log \frac{4n \sqrt{m_i} }{\delta} \Big) $.
\end{proof}

With different situations, different bounds may lead to different generalization performances. One can combine the two above-mentioned meta-learning bounds by a function which is defined piecewise to improve performance. The variational KL bound can take the minimum value of  Theorem \ref{Extend_PAC_lambda} and Theorem \ref{Extend_PAC_quadratic}, ensuring it is tight in both regimes. Prompted by PAC-Bayes variational bound, meta-learning PAC-Bayes variational bound is derived as follows:

\begin{theorem}\label{Extend_PAC_Variational}
(meta-learning PAC-Bayes variational bound). Let $P$ be some hyper-prior distribution, and $ Q $ be the posterior distribution, which is also known as base learner. Then for any $\delta \in(0,1]$, the following inequality holds uniformly for all hyper-posteriors distributions $ \mathcal{Q} \in \mathcal{M} $ with probability at least $ 1-\delta $
\begin{equation}
\begin{array}{l}
er (\mathcal{Q}, \tau)  \leq  \frac{1}{n} \sum_{i=1}^{n} \underset{ P \sim \mathcal{Q} }{ \mathbb{E} } \hat{e r}\left(Q, S_{i}\right) \\
 + \frac{1}{n} \sum_{i=1}^{n} {\rm min} \Big( \varepsilon_i + \sqrt{ \varepsilon_i \big( \varepsilon_i + 2 \underset{ P \sim \mathcal{Q} }{ \mathbb{E} } \hat{e r} \left( Q, S_i \right) \big)} , \sqrt{ \frac{\varepsilon_i}{2} } \Big) \vspace{2pt}\\
 + \sqrt{ \frac{1}{2(n-1)} \left( D ( \mathcal{Q} || \mathcal{P} ) + \log \frac{2 n}{\delta} \right) }.
\end{array}
\end{equation}
{\rm Here} $ \varepsilon_i = \frac{1}{m_i} \Big( D( \mathcal{Q} || \mathcal{P} ) + \underset{P \sim \mathcal{Q} }{ \mathbb{E} } D \left( Q || P \right) + \log \frac{4n \sqrt{m_i} }{\delta} \Big)$.
\end{theorem}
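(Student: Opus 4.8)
The plan is to follow verbatim the two-step template already used for Theorem \ref{Extend_PAC_lambda} and Theorem \ref{Extend_PAC_quadratic}, changing only the relaxation of the relative-entropy bound that is invoked at the task level. First I would reuse the tuple-hypothesis construction: work over the product space $\mathcal{M}\times\mathcal{H}$ with prior $\pi=(\mathcal{P},P)$ and posterior $\rho=(\mathcal{Q},Q(S_i,P))$, for which the chain-rule decomposition $D(\rho\|\pi)=D(\mathcal{Q}\|\mathcal{P})+\underset{P\sim\mathcal{Q}}{\mathbb{E}}D(Q\|P)$ established in the proof of Theorem \ref{Extend_PAC_lambda} still holds. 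Applying the PAC-Bayes relative-entropy (binary-$\mathrm{kl}$) bound on this composite space gives, for each observed task $i$, a single inequality $\mathrm{kl}(\bar e_i\|\widehat{\bar e}_i)\le\varepsilon_i$, where $\bar e_i=\underset{P\sim\mathcal{Q}}{\mathbb{E}}er(Q,\mathcal{D}_i)$, $\widehat{\bar e}_i=\underset{P\sim\mathcal{Q}}{\mathbb{E}}\widehat{er}(Q,S_i)$, and $\varepsilon_i=\frac{1}{m_i}\big(D(\mathcal{Q}\|\mathcal{P})+\underset{P\sim\mathcal{Q}}{\mathbb{E}}D(Q\|P)+\log\frac{2\sqrt{m_i}}{\delta_i}\big)$.

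The crucial observation is that this one inequality admits two relaxations that hold on the \emph{same} confidence event. The refined Pinsker inequality, solved as a quadratic in $\sqrt{\bar e_i}$, produces the branch $\widehat{\bar e}_i+\varepsilon_i+\sqrt{\varepsilon_i(\varepsilon_i+2\widehat{\bar e}_i)}$ (this coincides with the quadratic term of Theorem \ref{Extend_PAC_quadratic} once the normalisation of $\varepsilon_i$ is matched), while the standard Pinsker inequality $\mathrm{kl}(\widehat{\bar e}_i\|\bar e_i)\ge 2(\bar e_i-\widehat{\bar e}_i)^2$ produces the branch $\widehat{\bar e}_i+\sqrt{\varepsilon_i/2}$. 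Since both are consequences of a single high-probability event, their minimum is again a valid upper bound on $\bar e_i$, which is exactly the per-task variational bound. This reproduces the single-task variational PAC-Bayes bound verbatim on the composite hypothesis space.

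Step 2 then requires no new argument: I would import unchanged the environment-level inequality from the proof of Theorem \ref{Extend_PAC_lambda}, bounding $er(\mathcal{Q},\tau)$ by $\frac1n\sum_{i=1}^n\underset{P\sim\mathcal{Q}}{\mathbb{E}}er(Q(S_i,P),\mathcal{D}_i)$ plus the environment-complexity term $\sqrt{\frac{1}{2(n-1)}(D(\mathcal{Q}\|\mathcal{P})+\log\frac{n}{\delta_0})}$. Substituting the $n$ per-task variational bounds and combining the $n+1$ confidence events by a union bound with the allocation $\delta_0=\delta/2$, $\delta_i=\delta/(2n)$ turns $\log\frac{2\sqrt{m_i}}{\delta_i}$ into $\log\frac{4n\sqrt{m_i}}{\delta}$ and $\log\frac{n}{\delta_0}$ into $\log\frac{2n}{\delta}$, yielding exactly the stated $\varepsilon_i$ and the claimed inequality.

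The main obstacle is to make rigorous the claim that the task-level bound may be written directly in terms of the averaged scalars $\bar e_i$ and $\widehat{\bar e}_i$, without any illegitimate Jensen interchange inside the concave $\min$ or the square roots. The clean route is to note that the composite relative-entropy bound is \emph{already} an inequality between the averaged quantities, so the two Pinsker relaxations act on the scalars $\bar e_i,\widehat{\bar e}_i$ themselves and nothing has to be pushed through $\min$ or $\sqrt{\cdot}$; one need only check monotonicity in $\widehat{\bar e}_i$ so that the substitution of its definition is sound. A secondary bookkeeping point is that, unlike Theorem \ref{Extend_PAC_lambda}, no free parameter $\lambda$ enters here, so the union-bound allocation is the only place the constants are fixed.
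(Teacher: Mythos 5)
Your proposal is correct and follows essentially the same route as the paper's own proof: the tuple-hypothesis (composite-space) KL decomposition from Theorem~\ref{Extend_PAC_lambda}, a per-task application of the variational PAC-Bayes bound to the averaged quantities $\underset{P\sim\mathcal{Q}}{\mathbb{E}}\,er(Q,\mathcal{D}_i)$ and $\underset{P\sim\mathcal{Q}}{\mathbb{E}}\,\widehat{er}(Q,S_i)$, the unchanged environment-level step, and the union bound with $\delta_0=\delta/2$, $\delta_i=\delta/(2n)$. The only difference is cosmetic: where the paper invokes the single-task variational bound as a black box, you re-derive its two branches from the binary-$\mathrm{kl}$ inequality via the standard and refined Pinsker relaxations (correctly noting both hold on the same confidence event, so their minimum is valid), which is exactly how that lemma is obtained anyway.
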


\begin{proof}
For the task $ i $, we use PAC-Bayes variational bound to estimate the generalization error in each of the observed tasks $ i $

\begin{eqnarray}
\begin{array}{l}
\underset{P \sim \mathcal{Q}}{\mathbb{E}} er \left( Q_i, \mathcal{D}_i \right) \leq \frac{1}{n} \sum_{i=1}^{n} \underset{P \sim \mathcal{Q}}{\mathbb{E}} \widehat{er} \left( Q, S_i \right) \\
+ \frac{1}{n} \sum_{i=1}^{n} {\rm min} \Big( \varepsilon_i + \sqrt{ \varepsilon_i \big( \varepsilon_i + 2 \underset{ P \sim \mathcal{Q} }{ \mathbb{E} } \hat{er} \left( Q, S_i \right) \big)} , \sqrt{ \frac{\varepsilon_i}{2} } \Big),
\end{array}
\end{eqnarray}
where $ \varepsilon_i = \frac{1}{2m_i} \Big( D(\mathcal{Q} || \mathcal{P}) + \underset{P \sim \mathcal{Q}}{\mathbb{E}} D\left(Q|| P\right) + \log\frac{2\sqrt{m_i}}{\delta_i} \Big)$.

By applying the first step of the proof in Theorem \ref{Extend_PAC_lambda}, and assuming that $ \delta_0 = \frac{\delta}{2} $, $ \delta_i = \frac{\delta}{2n} $ yields
\begin{eqnarray}
\begin{array}{l}
er(\mathcal{Q}, \tau)  \leq  \frac{1}{n} \sum_{i=1}^{n} \underset{ P \sim \mathcal{Q} }{ \mathbb{E} } \hat{e r}\left(Q_{i}\left(S_{i}, P\right), S_{i}\right)\\
+ \frac{1}{n} \sum_{i=1}^{n} {\rm min} \Big( \varepsilon_i + \sqrt{ \varepsilon_i \big( \varepsilon_i + 2 \underset{ P \sim \mathcal{Q} }{ \mathbb{E} } \hat{e r} \left( Q_i, S_i \right) \big)} , \sqrt{ \frac{\varepsilon_i}{2} } \Big) \\
\Big. + \sqrt{ \frac{1}{2(n-1)} \left( D ( \mathcal{Q} || \mathcal{P} ) + \log \frac{2 n}{\delta} \right) }.
\end{array}
\end{eqnarray}
Here $ \varepsilon_i = \frac{1}{m_i} \Big( D( \mathcal{Q} || \mathcal{P} ) + \underset{P \sim \mathcal{Q} }{ \mathbb{E} } D \left( Q || P \right) + \log \frac{4n \sqrt{m_i} }{\delta} \Big) $.
\end{proof}

\section{PAC-Bayes bounds with data-dependent prior}\label{section_5_bound_dp}
For the PAC-Bayes bound theory, the generalization error upper bound mainly depends mainly on the regularization item involving the distance between prior distribution $P$ and posterior distribution $Q$. However, the prior distribution is chosen randomly, with a view to measuring the parameter space. Especially in meta-learning, the generalization error bound involves both hyper-prior and hyper-posterior distributions, which are hard to converge. Seeking to solve this issue, in this section we aim to learn a localized prior distribution through the {\it ERM approach} on a part of the training samples. Then the remaining data will be used to optimize generalization error bound.

% \subsection{Classical PAC-Bayes bound with data-dependent prior}
% We first divide the training datasets $S$ into two separate dataset, one of the dataset $R$ is used to learn the mean of the prior distribution by ERM, and then the remaining dataset $S\backslash R$ is applied to calculate the PAC-Bayes bound.

% The classical single-task PAC-Bayes bound with data-dependent prior is 
% \begin{theorem}
% (McAllester's single-task bound with data-dependent prior). Let $ P \in \mathcal{M} $ be some prior distribution over $\mathcal{H}$. Then for any $ \delta \in(0,1] $, the following inequality holds uniformly for all posteriors distributions $ Q \in \mathcal{M} $ with probability at least $ 1 - \delta $
% \begin{eqnarray}
% \begin{array}{cc}
% er (Q, \mathcal{D}) \leq \widehat{er}(Q, S \backslash R) + \sqrt{ \frac{ \mathrm{KL}(Q\|P) + \log \frac{m}{\delta}} {2(m-1)} }
% \end{array}
% \end{eqnarray}
% \end{theorem} 
% Here, $P$ is the learnt prior distribution based on dataset  $ R $, and then $ S \backslash R $ is applied to calculate the bound.

% \subsection{Extended PAC-Bayes bound with data-dependent prior}
Akin to the classical PAC-Bayes bound with data-dependent prior, for the meta-learning, we try to propose a novel extended PAC-Bayes bound with data-dependent prior. Specifically, during the {\it training phase} of meta-learning, the corresponding dataset of training task $i$ is also divided into two separate datasets. Based on the {\it ERM approach}, one can learn a data-dependent prior distribution over a section of the training samples
\begin{equation}\label{Eq_ERM_approach}
\begin{array}{l}
P_{\theta} = \arg \min \underset{h \in P_{\theta} }{ \mathbb{E}} {er}_{\mathrm{emp}}(h, S),
\end{array}
\end{equation}
where ${er}_{\mathrm{emp}}(h)=\frac{1}{n} \sum_{i=1}^{n} \ell \left(h, R_i\right)$, $n$ is the number of all training tasks, $R_i$ is the sample subset of task $i$ selected from the whole dataset $S_i$, providing information which can be used to calculate data-dependent prior. The remaining data $S_i\backslash R_i$ of task $i$ is applied to evaluate the generalization error bound for meta-learning. In practice, expectations over distribution $P$ are difficult to calculate. Therefore, the Monte Carlo method deemed most effective in obtaining the numerical results of (\ref{Eq_ERM_approach}). Furthermore, prior distribution $P$ is selected directly from hyper posterior distribution $\mathcal{Q}_{\theta}$. So the learned parameters $P_{\theta}$ is designated as the initial mean parameter of $\mathcal{Q}_{\theta}$. The extended PAC-Bayes bound with data-dependent prior can then be shown as follows:

% Here, $Q_{D} \triangleq Q \left( S_i \backslash R_i, P \right) $ represents the posterior for task $i$ with data-dependent prior.

%  one of task set and its corresponding sample dataset are applied to learn a hyper-posterior distribution $ \mathcal{Q}_{D} $ and prior distribution $P$. Although in the training phase we try to optimize the hyper-posterior $\mathcal{Q_{\theta}}$, in meta-learning hyper-posterior $\mathcal{Q_{\theta}}$ is a distribution over $\mathcal{M}(P)$. Therefore, for convenience we consider this scenario as data-dependent prior.

% Note that all $n$ training data are used by the learning algorithm ($n_0$ examples used to build the prior, $n$ to learn the posterior and $ n - n_0 $ to evaluate the risk certificate).

\begin{theorem}
(Meta-learning PAC-Bayes bound with data-dependent prior). Let Q : $ \mathcal{Z}^{m} \times \mathcal{M} \rightarrow \mathcal{M} $ be a base learner, and let $\mathcal{P}$ be some predefined hyper-prior distribution. Then for any $ \delta \in (0,1] $ the following inequality holds uniformly for all hyper-posterior distributions $\mathcal{Q}$ with probability at least $ 1 - \delta $,
\begin{equation}
\begin{array}{c}
er (\mathcal{Q}, \tau) \leq \frac{1}{n} \sum_{i=1}^{n} \underset{P \sim \mathcal{Q}_{D}}{\mathbb{E}} \hat{er} \left( Q, S_i \backslash R_i \right) +  \\
+ \frac{1}{n} {\displaystyle\sum_{i=1}^{n}} \sqrt{ \frac{ D (\mathcal{Q}_{D} || \mathcal{P}) + \underset{ P \sim \mathcal{Q}_{D} }{\mathbb{E}} D \left( Q(P,S_i \backslash R_i ) || P\right) + \log \frac{2 {n} m_{i}}{\delta} }{ 2\left(m_{i}-1\right)} } \vspace{2pt} \\
+ \sqrt{\frac{ D(\mathcal{Q}_{D} || \mathcal{P}) + \log \frac{2n}{\delta} }{ 2(n-1) } }.
\end{array}
\end{equation}
\end{theorem}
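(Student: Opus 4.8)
The plan is to replicate the two-level structure used in the proof of the classical meta-learning bound (Lemma \ref{PAC_Bayes_meta_classic}) and of Theorem \ref{Extend_PAC_lambda}, while exploiting the data split $S_i = R_i \cup (S_i \backslash R_i)$ to legitimise the data-dependent prior. The only genuinely new ingredient is the argument that a prior learned via the ERM step (\ref{Eq_ERM_approach}) on $R_i$ may still be treated as a valid data-independent prior for a PAC-Bayes inequality evaluated on the held-out portion $S_i \backslash R_i$; once this is established, the remainder is a transcription of the earlier proof with $S_i$ replaced by $S_i \backslash R_i$ and $\mathcal{Q}$ by $\mathcal{Q}_D$.

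First I would condition on the subsets $\{R_j\}_{j=1}^{n}$. Because the samples within each task are i.i.d.\ and the tasks are drawn independently from $\mathcal{T}$, the held-out samples $S_i \backslash R_i$ remain, conditionally on all the $R_j$'s, i.i.d.\ draws from $\mathcal{D}_i$ that are independent of $\{R_j\}$. The data-dependent prior $P$ (equivalently the hyper-posterior $\mathcal{Q}_D$, whose mean is initialised at the ERM solution $P_{\theta}$) is a deterministic function of $\{R_j\}$, hence frozen under this conditioning. Consequently every PAC-Bayes inequality invoked below, applied with $S_i \backslash R_i$ as the effective sample, is evaluated against a prior that is independent of the evaluation data, exactly as the theory demands. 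Taking expectation over $\{R_j\}$ at the end via the tower property preserves the stated high-probability guarantee, since the conditional statement holds for almost every realisation of $\{R_j\}$.

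Next I would carry out the task-level estimate. Following the tuple-hypothesis device from the proof of Theorem \ref{Extend_PAC_lambda}, set $f = (P,h)$, $\pi \triangleq (\mathcal{P}, P)$ and $\rho \triangleq (\mathcal{Q}_D, Q(P, S_i \backslash R_i))$, so that the composite divergence decomposes additively as $D(\rho \| \pi) = D(\mathcal{Q}_D \| \mathcal{P}) + \mathbb{E}_{P \sim \mathcal{Q}_D} D(Q \| P)$. Applying McAllester's single-task bound (Lemma \ref{mcallester1999}) in this enlarged hypothesis space, with the frozen prior and the held-out sample, yields for each task the square-root term whose numerator is $D(\mathcal{Q}_D \| \mathcal{P}) + \mathbb{E}_{P \sim \mathcal{Q}_D} D(Q(P, S_i \backslash R_i) \| P) + \log \frac{2 n m_i}{\delta}$ over $2(m_i - 1)$, at confidence $\delta_i = \delta/(2n)$. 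For the environment level I would re-use the classical PAC-Bayes bound across the $n$ observed tasks, now with the predefined hyper-prior $\mathcal{P}$ as prior and $\mathcal{Q}_D$ as a data-dependent posterior --- permissible since posteriors may depend arbitrarily on the sample --- producing the final term $\sqrt{(D(\mathcal{Q}_D \| \mathcal{P}) + \log \frac{2n}{\delta})/(2(n-1))}$ at confidence $\delta_0 = \delta/2$. A union bound over these $n+1$ events (so that $\sum_i \delta_i + \delta_0 = \delta$), followed by averaging the task-level estimates over $i = 1,\dots,n$, assembles the claimed inequality.

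The main obstacle I anticipate is the first step: rigorously justifying that the ERM-learned prior does not invalidate the PAC-Bayes inequality. The delicate point is that $\mathcal{Q}_D$ may depend on the entire collection $\{R_j\}$ rather than on $R_i$ alone, so one must verify that $S_i \backslash R_i$ is simultaneously independent of the prior across all tasks; the within-task i.i.d.\ property together with independence across tasks secures this, but the conditioning has to be stated carefully so that the subsequent expectation over $\{R_j\}$ does not erode the confidence level. A secondary bookkeeping subtlety is that the sample count appearing in the task-level denominator should be read as the cardinality of the held-out set $S_i \backslash R_i$; tracking this precisely is routine and does not affect the structure of the argument.
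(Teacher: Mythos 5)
The paper states this theorem \emph{without} any accompanying proof, so there is no official argument to compare against; judged on its own merits, your reconstruction is essentially correct and follows exactly the template of the paper's proofs of Lemma~\ref{PAC_Bayes_meta_classic} and Theorem~\ref{Extend_PAC_lambda}: a task-level McAllester bound in the tuple space $f=(P,h)$ with prior $\pi=(\mathcal{P},P)$ and the additive KL decomposition $D(\rho\|\pi)=D(\mathcal{Q}_D\|\mathcal{P})+\mathbb{E}_{P\sim\mathcal{Q}_D}D(Q\|P)$, an environment-level bound across the $n$ tasks, and a union bound with $\delta_i=\delta/(2n)$ and $\delta_0=\delta/2$, which reproduces the stated $\log\frac{2nm_i}{\delta}$ and $\log\frac{2n}{\delta}$ factors.

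Two remarks. First, the conditioning-on-$\{R_j\}$ argument you present as the key new ingredient is sound but stronger than this formulation actually requires: the ERM output of (\ref{Eq_ERM_approach}) enters the bound only through the hyper-posterior $\mathcal{Q}_D$ (it initializes the mean of $\mathcal{Q}_\theta$), and PAC-Bayes inequalities hold uniformly over all posteriors, however they depend on the data. The only objects that must be independent of the evaluation sample --- the hyper-prior $\mathcal{P}$ at the environment level and the tuple prior $(\mathcal{P},P)$ at the task level --- are data-free by construction, so the theorem is in effect Lemma~\ref{PAC_Bayes_meta_classic} applied to the subsamples $S_i\backslash R_i$; the statistical role of the split is thus lighter than your framing suggests (its benefit here is algorithmic). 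Your freezing argument would become genuinely necessary if the learned $P_{\theta}$ were used as the \emph{prior} of a single-task bound evaluated on data it was trained on (as in Parrado-Hern\'andez-style constructions, or the testing phase the paper alludes to), and since it is correctly stated, it does no harm. Second, what you call a secondary bookkeeping subtlety is in fact a substantive point: the derivation yields $\lvert S_i\backslash R_i\rvert$, not $m_i$, in both the denominator $2(m_i-1)$ and the logarithm $\log\frac{2nm_i}{\delta}$; since the task-complexity term decreases in the sample count, the theorem as literally printed (with $m_i$ the full size of $S_i$) is slightly stronger than what any such argument proves, and it must be read with $m_i$ reinterpreted as the held-out cardinality, exactly as you propose.
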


Similarly, in the testing phase, when encountering new tasks,  one can also learn a data-dependent prior $P$ through the {\it ERM approach}.

\section{Practical meta-learning PAC-Bayes methods}\label{section_6_practical}
meta-learning PAC-Bayes bound tries to provide a generalization performance guarantee for the learned model with an arbitrarily high probability. In practice, one can train a probability neural network by minimizing the generalization error upper bound.

% This choice can be justified on the grounds that $ \ell^{\mathrm{x} - \mathrm{e}} (z, y) $ gives an upper bound on the probability of mistake when the label is chosen at random from the distribution produced by applying soft-max on $z$ (e.g., the output of the last linear layer of a neural network). 
\subsection{Training objectives}
Based on the proposed three meta-learning PAC-Bayes bounds, the corresponding three training objectives are developed as:

Theorem. \ref{Extend_PAC_lambda} and Theorem. \ref{Extend_PAC_quadratic} lead to the meta-learning PAC-Bayes $\lambda$ objective 
\begin{equation}\label{Eq_train_obj_lambda}
\begin{array}{ll}
f_{\lambda} (\theta) = \frac{1}{n} \sum_{i=1}^{n} \frac{1}{ (1 - \lambda_0 /2)^2  } \underset{ P \sim \mathcal{Q} }{ \mathbb{E} } \hat{er} \left( Q, S_i \right)\\
+\frac{1}{n} \sum_{i=1}^{n}  \frac{ D( \mathcal{Q} || \mathcal{P} ) + \underset{P \sim \mathcal{Q} }{ \mathbb{E} } D\left( Q || P \right) + \log \frac{ 4n\sqrt{m_i} }{ \delta } }{ m_i\lambda(1-\lambda /2)^2 } \vspace{4pt}\\
+ \sqrt{ \frac{1}{2(n-1)} \left( D ( \mathcal{Q} || \mathcal{P} ) + \log \frac{2 n}{\delta} \right) },
\end{array}
\end{equation}

% \operatorname{er} (Q, \mathcal{D}) \leq \widehat{er}(Q, S)
and meta-learning quadratic PAC-Bayes objective
\begin{equation}\label{Eq_train_obj_quad}
\begin{array}{l}
f_{\rm quad} (\theta) = \frac{1}{n} {\sum_{i=1}^{n}} \Big( \sqrt{ \underset{ P \sim \mathcal{Q} }{ \mathbb{E} } \hat{er} \big( Q, S_{i} \big) + \epsilon_i } + \sqrt{\epsilon_i} \Big)^2 \\
+ \sqrt{ \frac{1}{2(n-1)} \Big( D ( \mathcal{Q} || \mathcal{P} ) + \log \frac{2 n}{\delta} \Big) }.
\end{array}
\end{equation}
{\rm Here the} $ \epsilon_i = \frac{1}{2m_i} \big( D( \mathcal{Q} || \mathcal{P} ) + \underset{P \sim \mathcal{Q} }{ \mathbb{E} } D \left( Q || P \right) + \log \frac{4n \sqrt{m_i} }{\delta} \big)$.

By comparison, the training objective from Theorem. \ref{Extend_PAC_Variational} takes the following form
\begin{equation}\label{Eq_train_obj_varia}
\begin{array}{l}
f_{\rm varia} (\theta) = \frac{1}{n} \sum_{i=1}^{n} \underset{ P \sim \mathcal{Q} }{ \mathbb{E} } \hat{e r}\left(Q, S_{i}\right) \\
+ \frac{1}{n} \sum_{i=1}^{n} {\rm min} \Big( \varepsilon_i + \sqrt{ \varepsilon_i \big( \varepsilon_i + 2 \underset{ P \sim \mathcal{Q} }{ \mathbb{E} } \hat{e r} \left( Q, S_i \right) \big)} , \sqrt{ \frac{\varepsilon_i}{2} } \Big) \vspace{2pt}\\
+ \sqrt{ \frac{1}{2(n-1)} \left( D ( \mathcal{Q} || \mathcal{P} ) + \log \frac{2 n}{\delta} \right) },
\end{array}
\end{equation}
{\rm where the} $ \varepsilon_i = \frac{1}{m_i} \Big( D( \mathcal{Q} || \mathcal{P} ) + \underset{P \sim \mathcal{Q} }{ \mathbb{E} } D \left( Q || P \right) + \log \frac{4n \sqrt{m_i} }{\delta} \Big)$.

%  Then, based the previous two PAC-Bayes bound, we further propose two extended PAC-Bayes bound for meta-learning in the next step.

\subsection{Loss function}

As a rule, the standard loss function used on multi-class classification problems is the {\it cross-entropy loss function } $ \ell: \mathbb{R}^{k} \times[k] \rightarrow \mathbb{R} $ defined by 
\begin{equation}
\begin{array}{l}
\ell_{CE} = - \sum^{k}_{c=1} y_{o,c} \log(p_{o,c}),
\end{array}
\end{equation}
where $y_{o,c}$ is the binary indicator (0 or 1) if class label $c$ is the correct classification for observation $o$, $k$ is the number of class and $p_{o,c}$ is the predicted probability observation $o$ of class $c$. It is obviously that this loss function is unbounded loss function. However, the proposed meta-learning PAC-Bayes bound is only available for bounded loss function. Here, a ``bounded cross-entropy'' loss function is applied (See \cite{perez2020tighter}) as the surrogate loss for training in all experiments with $f_{\lambda}$, $f_{\rm quad}$ and $f_{\rm varia}$. Specifically, the loss function is clipped to $[0, \log(\frac{1}{p_{\rm min}}) ]$, where $p_{\rm min}$ is the lower bound of the network probabilities.
 
% where $\hat{l}_{S}^{\mathrm{x}-\mathrm{e}}(w)=\frac{1}{n} \sum_{i=1}^{n} \tilde{\ell}_{1}^{\mathrm{x}-\mathrm{e}}\left(h_{w}\left(X_{i}\right), Y_{i}\right)$ denotes the empirical error rate for the cross-entropy loss, and $h_{w}: \mathcal{X} \rightarrow \mathbb{R}^{k}$ denotes the function implemented by the neural network that uses weights $w$.

\begin{algorithm}[tbp]
\caption{Meta training phase, without data-dependent prior}
\label{alg_meta_train}
  \begin{algorithmic}[1]
    \Require
        Datasets of $n$ training tasks: $S_1, ..., S_n$.
    \Ensure
        Learned meta-learner with parameter $\theta$.
    \State Initializing hyper-prior $\mathcal{P}$, hyper-posterior $\mathcal{Q}$, prior model $\theta$, posterior model $\phi_i, i=1,...,n$;
    \While{not done}
        \For{task $i, i=1,...,n$}
            \State Sample mini-batch from datasets $S_i, i=1,...,n$
            \State Calculate $D(Q_{\phi_i}\| P_{\theta})$(\ref{Eq_D_qp})
            \State Calculate 
            $\underset{ P \sim \mathcal{Q} }{ \mathbb{E} } \hat{er} \left( Q, S_i \right)$ by Monte-Carlo method
        \EndFor
        \State Compute the training objective $f$ (see \ref{Eq_train_obj_lambda}, \ref{Eq_train_obj_quad} or \ref{Eq_train_obj_varia})
        \State Gradient step using $\left[ \begin{array}{c} \nabla_{\theta} f \\ \nabla_{\phi_i} f \end{array} \right] $
    \EndWhile\\
    % \label{code:from:select} \\
    \Return $\theta$;
  \end{algorithmic}
\end{algorithm}

% We explain now the differences between the relaxed and refined versions of the Pinsker inequality, used for defining the above presented PAC-Bayes inspired training objectives and crucial to understand their differences. Recall the definition of binary KL divergence:
\subsection{Gaussian weight distributions}
% In order to achieve better performance, we investigate the weights with different prior distributions: Gaussian weight distributions and Laplace weight distributions.
In this section, the specific forms of hyper-prior distribution $\mathcal{P}$, hyper-posterior distribution $\mathcal{Q}$ and weights distribution of the stochastic neural network are selected. 

For the meta-learning PAC-Bayes bound, the hyper-prior distribution $\mathcal{P}$ is set as a zero-mean Gaussian distribution
\begin{equation}
\begin{array}{l}
\mathcal{P} \triangleq \mathcal{N}\left(0, \kappa_{\mathcal{P}}^{2} I_{N_{P} \times N_{P}}\right),
\end{array}
\end{equation}
where $\kappa_{\mathcal{P}}>0$ is constant and $N_{P}$ is the number of neural network parameters $w$. 

Correspondingly, the hyper-posterior distribution $\mathcal{Q}$, which consists of all distributions over $\mathbb{R}^{N_P}$, is defined as a family of isotropic Gaussian distributions as follows
\begin{equation}
\begin{array}{l}
\mathcal{Q}_{\theta} \triangleq \mathcal{N}\left(\theta, \kappa_{\mathcal{Q}}^{2} I_{N_{P} \times N_{P}}\right),
\end{array}
\end{equation}
where $\kappa_{\mathcal{Q}}>0$ is also a predefined constant. Therefore the KL divergence between the hyper-prior distribution $\mathcal{P}$ and hyper-posterior distribution $\mathcal{Q}$ equals
\begin{eqnarray}
\begin{array}{cc}
D \left( \mathcal{Q}_{\theta} \| \mathcal{P} \right) = \frac{ \|\theta\|_{2}^{2} + \kappa_{\mathrm{Q}}^{2} }{ 2 \kappa_{\mathcal{P}}^{2} } + \log \frac{ \kappa_{\mathcal{P}} }{ \kappa_{\mathcal{Q}} } - \frac{1}{2}.
\end{array}
\end{eqnarray}

In the PAC-Bound theory, a probability  neural  network is applied, which means all weights $w$ are stochastic variables drawing from prior or posterior distribution. In this paper, we define that each weight $w_i$ in the neural network as it obeys Gaussian distribution. The prior $P_{\theta}$ and the posteriors $Q_{\phi_{i}}, i=1, \ldots, n,$ are defined as factorized Gaussian distributions

% given a parametric family of priors $\left\{P_{\tilde{\theta}}: \tilde{\theta} \in \mathbb{R}^{N_{P}}\right\}$, $N_{P} \in \mathbb{N},$ the space of hyper-posteriors consists of all distributions over $\mathbb{R}^{N_{P}}$.  Notice that $\mathcal{Q}$ appears in the extended bound in two forms (i) divergence from the hyper-prior $D(\mathcal{Q} \| \mathcal{P})$ and (ii) expectations over $P \sim \mathcal{Q}$.

% Note that the hyper-prior acts as a regularization term which prefers solutions with small $L_{2}$ norm.
\begin{eqnarray}
\begin{array}{cc}
P_{\theta}(w) = \prod_{k=1}^{d} \mathcal{N} \left( w_{k} ; \mu_{P, k}, \sigma_{P, k}^{2} \right),
\end{array}
\end{eqnarray}
\begin{eqnarray}
\begin{array}{cc}
Q_{\phi_{i}}(w) = \prod_{k=1}^{d} \mathcal{N}\left(w_{k} ; \mu_{i, k}, \sigma_{i, k}^{2} \right),
\end{array}
\end{eqnarray}
where $d$ is the number of neural network parameters and $n$ is the number of tasks. The corresponding KL divergence between prior $P_{\theta}$ and the posteriors $Q_{\phi_{i}}, i=1, \ldots, n,$ is 
\begin{eqnarray}\label{Eq_D_qp}
\begin{array}{l}
D \left( Q_{\phi_{i}} \| P_{\theta} \right) = \\ \frac{1}{2} \sum_{k=1}^{d} \Big(\log \frac{ \sigma_{P, k}^{2} }{ \sigma_{i, k}^{2} } + \frac{ \sigma_{i, k}^{2} + \left( \mu_{i, k} - \mu_{P, k} \right)^{2} }{ \sigma_{P, k}^{2} } - 1 \Big).
% \log \frac{ \sigma_{P, k}^{2} }{ \sigma_{i, k}^{2} } +  \frac{ \sigma_{i, k}^{2} + ( \mu_{i, k} - \mu_{P, k} )^2 }{ \sigma_{P, k}^{2} } - 1.
\end{array}
\end{eqnarray}

As we started earlier, the prior distribution $P_{\tilde{\theta}}$ is sampled from hyper-posterior distribution $\mathcal{Q}_\theta$. Practically, it follows that the prior distribution parameters $\tilde{\theta} = \theta + \varepsilon_{P}, \varepsilon_{P} \sim \mathcal{N} \left(0, \kappa_{\mathcal{Q}}^{2} I_{N_{P} \times N_{P}} \right) $. In other words, prior distribution $P_{\tilde{\theta}}$ sampling from hyper-posterior distribution $\mathcal{Q}_\theta$ means adding Gaussian noise $\varepsilon_{P}$ to the parameters $\theta$ during training. The specific pseudo code is shown in Algorithm \ref{alg_meta_train} and Algorithm \ref{alg_meta_train_dp} for both random prior and data-dependent prior respectively.

\begin{algorithm}[tbp]
\caption{Meta training phase, with data-dependent prior}
\label{alg_meta_train_dp}
  \begin{algorithmic}[1]
    \Require
        Datasets of $n$ training tasks: $S_1, ..., S_n$.
    \Ensure
        Learned meta-learner with parameter $\theta$.
    \State Initializing  prior model $\theta$;
    \State Separate training datasets $S_i$ into two parts $S_i/R_i, R_i, i=1,...,n$;
    \While{not done}
        \State Sample mini-batch from datasets $R_i, i=1,...,n$
        \State Calculate $\underset{h \in P_{\theta} }{ \mathbb{E}} {er}_{\mathrm{emp}}(h, S)$ (\ref{Eq_ERM_approach})
        \State Gradient step using $ \nabla_{\theta} f $
    \EndWhile
    \State Initializing  hyper-posterior $\mathcal{Q}$ with learned parameter $\theta$, prior model $P$, posterior model $Q, i=1,...,n$;
    \While{not done}
        \For{task $i, i=1,...,n$}
            \State Sample mini-batch from  $S_i/R_i, i=1,...,n$
            \State Calculate $D(Q_{\phi_i}\| P_{\theta})$(\ref{Eq_D_qp})
            \State Calculate 
            $\underset{ P \sim \mathcal{Q} }{ \mathbb{E} } \hat{er} \left( Q, S_i \right)$ by Monte-Carlo method
        \EndFor
        \State Compute the training objective $f$ (see \ref{Eq_train_obj_lambda}, \ref{Eq_train_obj_quad} or \ref{Eq_train_obj_varia})
        \State Gradient step using $\left[ \begin{array}{c} \nabla_{\theta} f \\ \nabla_{\phi_i} f \end{array} \right] $
    \EndWhile\\
    % \label{code:fram:select} \\
    \Return $\theta$;
  \end{algorithmic}
\end{algorithm}

\section{Experiments}\label{section_7_experiments}
In this section, the performance of our proposed meta-learning PAC-Bayes bound algorithms is illustrated with image classification tasks solved by stochastic neural networks. Specifically, we conduct our procedure within two different environments based on the MNIST dataset, those being {\it permuted pixels} and {\it permuted labels}. For the {\it permuted pixels} environment, each task is constructed by a shuffle of image pixels with 60000 training samples and 10000 testing samples. For the {\it permuted labels} environment, each task is generated by a permutation of image labels with the same number of training and testing samples as produced in the permuted pixels environment.

For the shuffled pixels experiment, the neural network structure selected is a full connected neural network (FCN) with 4 layers (3 hidden layers and a linear output layer) and 400 units per layer. For the permuted labels experiment, the neural network structure is designated as  a 4-layers convolutional neural network (CNN), comprising 2 convolution layers ,each with $5 \times 5$ kernels, and 2 full connected layers. For all experiments, ReLU activations are used. The optimizer is selected as Adam, with a learning rate of $10^{-3}$.

For both of two experiments, each initialized log-var $\log\sigma^{2}_{P}$ of weights is drawn from $\mathcal{N}(-10,0.01)$. The hyper-prior and hyper-posterior parameters are $ \kappa_{\mathcal{P}} = 2000$ and $\kappa_{\mathcal{Q}} = 0.001$ respectively. In the meta-learning PAC-Bayes bound, the confidence parameter chosen is $\delta=0.1$. Source code is available at GitHub\footnote{Codes are available on \url{https://github.com/tyliu22/Meta-learning-PAC-Bayes-bound-with-data-depedent-prior.git}}.

\subsection{Comparison of various PAC-Bayes bounds}
In this section, we focus on the performance of three proposed meta-learning PAC-Bayes bounds. For the meta-training phase, we run the total training of 50 epochs, maximal number of tasks in each meta-batch being 16, while 10 tasks are used for the meta-learner to learn. For the testing phase, we run a total testing of 20 epochs, using 20 tasks to confirm the  meta-learner performance. We select 128 as the data batch size for training and testing.

First, we investigate the weights of the stochastic neural network. As shown in Figure \ref{model_ana_pixel}, the average log-variance parameter of each layer's weights is analyzed. The higher the average $\log(\sigma^2)$ is, the more flexible are the weights are. In the shuffled pixel experiment, the lower layers perform with high variance which can extract the feature of shuffled-pixels image robustly, while the higher layers perform with a low variance which corresponds with fixed labels. Contrastingly, in the permuted label experiments, as shown in Figure \ref{model_ana_label}, the higher layers perform with high variance which can adapt robustly to the permutation of image label, and the lower layers' low variance performance corresponds with fixed-image samples.

Next, the influence of different numbers of training tasks on performance is analyzed, in relation to generalization error bound, empiric loss and empiric error. In Figure \ref{NumTrainTask_NewTask}, it is clear that, as the number of training tasks increases, the learned model achieves improved generalization performance and accuracy. 

\begin{figure}[tbp]
	\centering 
	\subfigure[]{
        \label{model_ana_pixel}
        \includegraphics[width=0.22\textwidth]{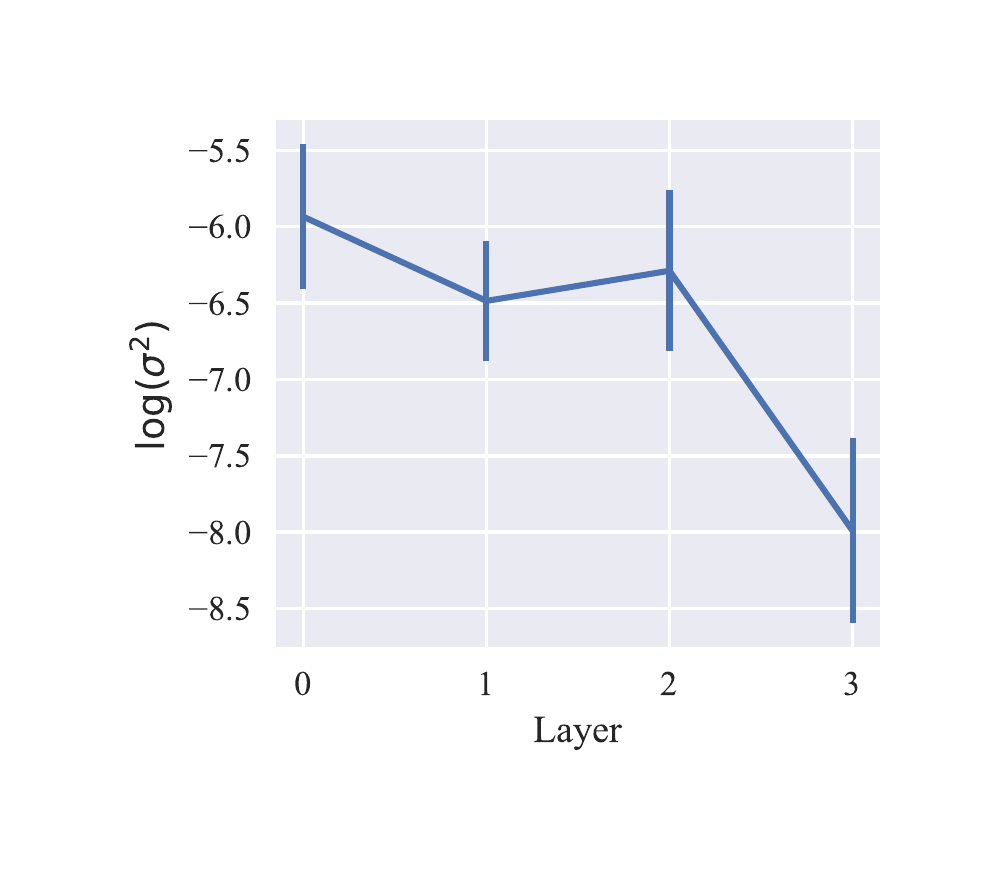}}
	\subfigure[]{
	    \label{model_ana_label}
        \includegraphics[width=0.22\textwidth]{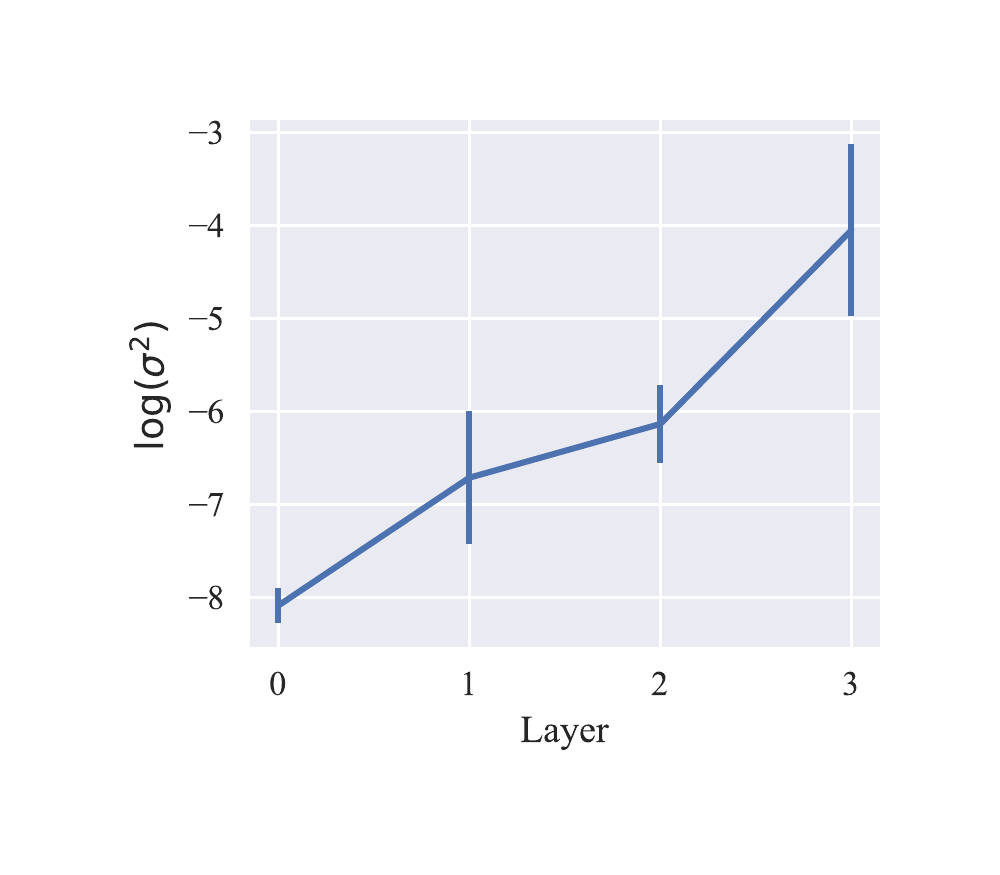}}
	\caption{Model parameter analysis by layers, $\log(\sigma^2)$ represents the weight uncertainty of each layer. (a) Prior model parameter analysis  for shuffled pixels environment; (b) Prior model parameter analysis for permuted labels environment.}
	\label{Prior_model_analysis}
\end{figure}

\begin{figure*}[!htbp]
	\centering 
	\subfigure[Generalization error bound]{
        \label{NumTrainTask_BoundonNewTask}
        \includegraphics[width=0.25\textwidth]{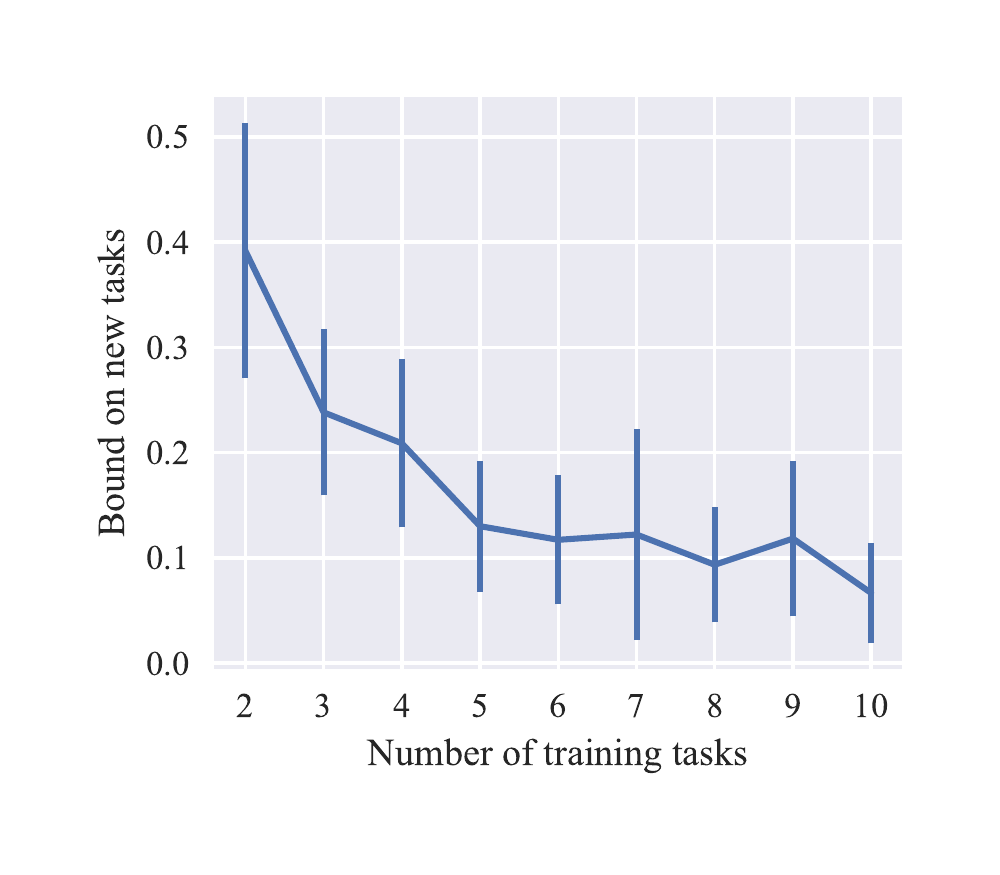}}
	\quad
	\subfigure[Empiric loss]{
	    \label{NumTrainTask_LossonNewTask}
        \includegraphics[width=0.27\textwidth]{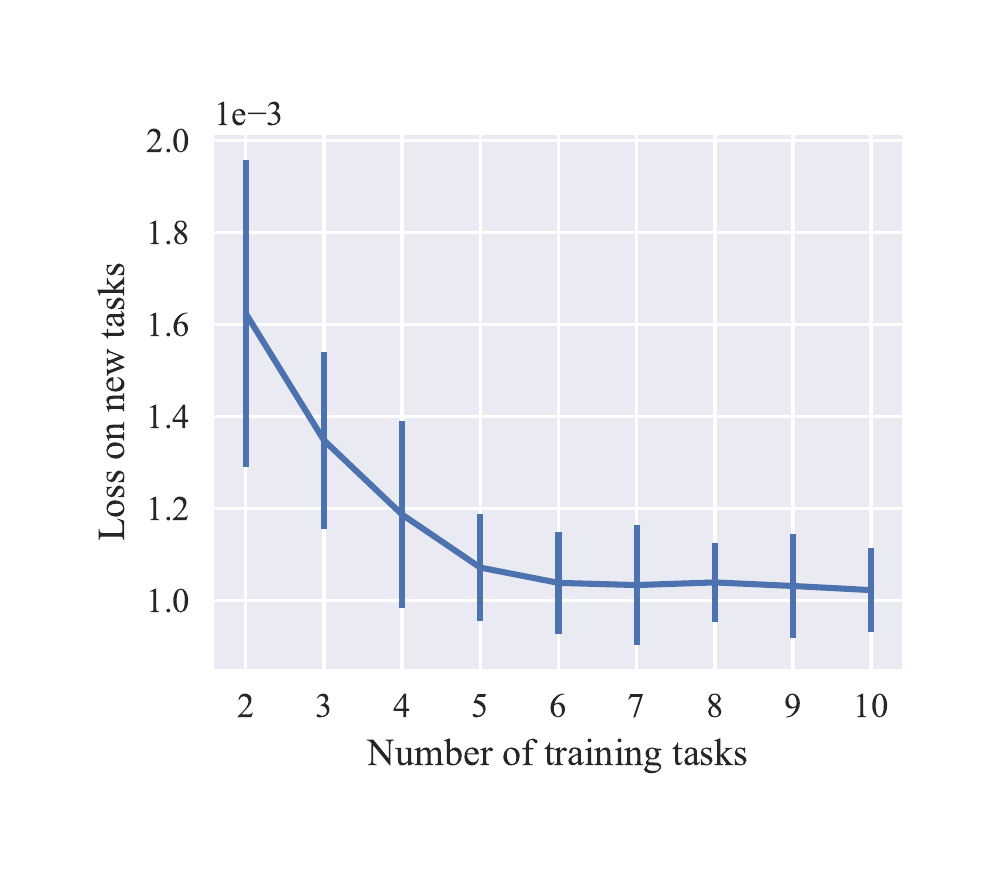}}
	\subfigure[Empiric error]{
        \label{NumTrainTask_ErrorNewTask}
        \includegraphics[width=0.25\textwidth]{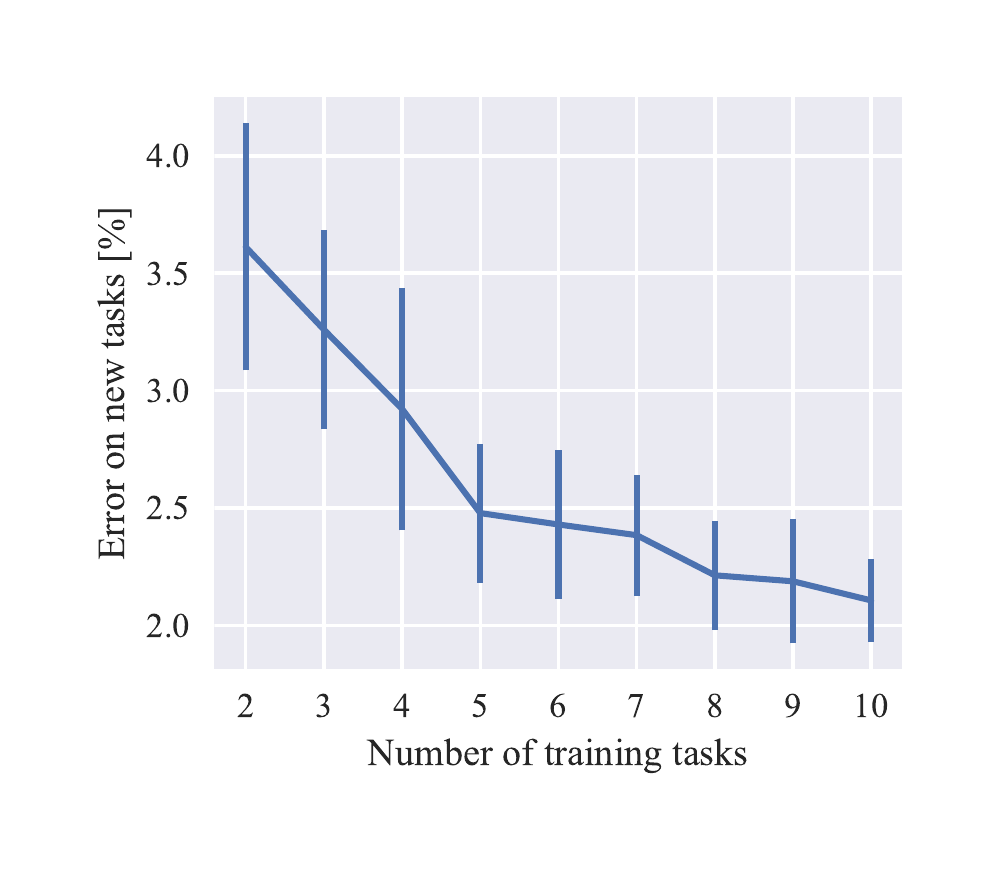}}
	\caption{The average performance of learning new tasks with different number of training tasks. (a) The average generalization error bound of learning new tasks; (b) The average empiric loss of learning new tasks; (c) The average empiric error of learning new tasks.}
	\label{NumTrainTask_NewTask}
\end{figure*}

\begin{table*}[!htbp]
\centering
\caption{Comparison of various PAC-Bayes bounds in training phase in both shuffled pixels and permuted labels environment with different prior model.}
\label{Table_analysis_train}
\begin{tabular}{cccccccc}
\toprule[1pt]
Environment & Prior model & Objective   & Bound  & Task complexity & Meta complexity & Empirical loss & Error ($\%$) \\ \hline
\multirow{10}{*}{Shuffled pixels}&\multirow{5}{*}{Random prior}   & $f_{\rm classic}$ & 0.8117 &   0.1368     & {\bf 0.6199}  & 0.05503   & 2.19 \\
& & $f_{\rm Seeger}$  & 0.7924 &   0.1312     & 0.6312     & 0.02993      & 2.02 \\
& & $f_{\lambda}$     & 0.8094 &   {\bf 0.09615} & 0.6327  & 0.04027      & 2.15 \\
& & $f_{\rm quad}$    & 0.7761 &   0.1205     & 0.6286     & 0.02696      & {\bf 2.01} \\
& & $f_{\rm varia}$   & {\bf 0.7738} & 0.1249 & 0.6262     & {\bf 0.02272} & {\bf 2.01} \\ \cline{2-8}
&\multirow{5}{*}{data-dependent prior}   & $f_{\rm classic}$  & 0.7213 &   0.01808    & 0.5519    & 0.1514   &  {\bf 4.02}  \\
& & $f_{\rm Seeger}$   & 0.7030 &   0.01213    & {\bf 0.5516}    & 0.1393   &  10.05  \\
& & $f_{\lambda}$      & 0.8095 &   {\bf 0.00223}    & 0.5547    & {\bf 0.1263}   &  6.81  \\
& & $f_{\rm quad}$     & {\bf 0.7006} &   0.01271    & 0.5523    & 0.1357   &  6.94  \\
& & $f_{\rm varia}$    & 0.7681 &   0.01503    & 0.5518    & 0.2012   &  5.22  \\ \hline
\multirow{10}{*}{Permuted labels}&\multirow{5}{*}{Random prior} &$f_{\rm classic}$  & 0.6476 &   0.07143    & {\bf 0.5504}    & 0.02574      &  0.813  \\
& &$f_{\rm Seeger}$   & 0.6463 &   0.06144    & 0.5513    & 0.03357      &  0.880  \\
& &$f_{\lambda}$      & 0.6258 &   {\bf 0.04255}    & 0.5513    & {\bf 0.01595}      &  {\bf 0.723}  \\
& &$f_{\rm quad}$     & 0.6360 &   0.05332    & 0.5514    & 0.03136      &  0.878  \\
& &$f_{\rm varia}$    & {\bf 0.6208} &   0.04597    & 0.5505    & 0.02419      &  0.811  \\ \cline{2-8}
&\multirow{5}{*}{data-dependent prior} &$f_{\rm classic}$  & 0.6519 &   0.08259   & {\bf 0.5505} & 0.01887  &  0.781  \\
& &$f_{\rm Seeger}$   & 0.6254 &   0.06053   & 0.5506     & 0.01422  &  0.771  \\
& &$f_{\lambda}$      & 0.6290 &   0.05074   & 0.5509     & {\bf 0.01369}  &  0.748  \\
& &$f_{\rm quad}$     & 0.6242 &   0.05539   & 0.5507     & 0.01816  &  {\bf 0.708}  \\
& &$f_{\rm varia}$    & {\bf 0.6143} & {\bf 0.04935}   & 0.5506     & 0.01440  &  0.781  \\
\bottomrule[1pt]
\end{tabular}
\vspace{2pt}

The performance in the training phase of five meta-learning training objectives on the shuffled pixels and permuted labels MNIST environments  with different prior models is analyzed in the above Table, in terms of PAC Bayes bound, task complexity, meta complexity, empirical loss and estimated  error.
\end{table*}

\begin{table*}[!htbp]
\centering
\caption {Comparison of various PAC-Bayes bounds in testing phase in both shuffled pixels and permuted labels environment with different prior model ($\pm$ indicates the $95\%$ confidence interval).}
\label{Table_analysis_test}
\begin{tabular}{cccccc}
\toprule[1pt]
Environment & Prior model &Training objective    & Test bound   & Test loss (${\rm e}-04$)  & Test error (\%) \\ \hline
\multirow{10}{*}{Shuffled pixels}&\multirow{5}{*}{Random prior}   &    $f_{\rm classic}$        &   $0.1580\pm 0.02075$ &  $\mathbf{8.861\pm 0.4593}$  & $2.541\pm 0.1341$    \\
& &     $f_{\rm Seeger}$       &   $0.2196\pm 0.04459$ &  $12.29\pm 0.9517$  & $2.832\pm 0.169$    \\
& & $f_{\lambda}$   &   $\mathbf{0.1271\pm 0.01617}$ &  $9.482\pm 0.5155$  & $2.538\pm 0.1224$   \\
& & $f_{\rm quad}$        &   $0.1957\pm 0.03627$ &  $10.61\pm 0.6422$  & $2.753\pm 0.009047$    \\
& & $f_{\rm varia}$       &   $0.1284\pm 0.02872$ &  $9.403\pm 0.3873$  & $\mathbf{2.432\pm 0.07485}$    \\  \cline{2-6}
&\multirow{5}{*}{data-dependent prior}   &    $f_{\rm classic}$        &   $0.1660\pm 0.05372$ &  $\mathbf{1.669\pm 0.2091}$  & $\mathbf{3.614\pm 0.3152}$    \\
& &    $f_{\rm Seeger}$      &   $0.1627\pm 0.04773$ &  $1.798\pm 0.2311$  & $4.033\pm 0.3035$    \\
& & $f_{\lambda}$   &   $0.1893\pm 0.01472$ &  $1.816\pm 0.2451$  &  $3.692\pm 0.4079$   \\
& & $f_{\rm quad}$        &   $0.1671\pm 0.08549$ &  $2.175\pm 0.4371$  & $3.719\pm 0.3026$    \\
& & $f_{\rm varia}$       &   $\mathbf{0.1610\pm 0.06316}$ &  $1.790\pm 0.2783$  & $3.809\pm 0.2905$    \\ \hline
\multirow{10}{*}{Permuted labels}&\multirow{5}{*}{Random prior} &    $f_{\rm classic}$        &   $\mathbf{2.905\pm 0.1256}$   &  $1.156\pm 0.09357$ & $42.62\pm 4.999$    \\
& &     $f_{\rm Seeger}$       &   $3.196\pm 0.1018$   &  $1.374\pm 0.06323$ & $54.83 \pm 4.557$    \\
& & $f_{\lambda}$   &   $3.282\pm 0.1580$   &  $\mathbf{1.074\pm 0.09365}$ & $\mathbf{40.73\pm 5.466}$    \\
& & $f_{\rm quad}$        &   $3.284\pm 0.1081$   &  $1.393\pm 0.06071$ & $55.60\pm 4.241$    \\
& & $f_{\rm varia}$       &   $3.410\pm 0.1423$   &  $1.241\pm 0.09571$ & $48.75\pm 5.178$    \\ \cline{2-6}
&\multirow{5}{*}{data-dependent prior} &    $f_{\rm classic}$       &   $0.1227\pm 0.02161$ &  $\mathbf{3.354\pm 0.1787}$  & $0.896\pm 0.04079$    \\
& &    $f_{\rm Seeger}$      &   $0.1220\pm 0.03072$ &  $3.426\pm 0.1126$  & $0.950 \pm 0.03477$    \\
& & $f_{\lambda}$   &   $\mathbf{0.1071\pm 0.03376}$ &  $3.643\pm 0.1649$  & $0.934\pm 0.03316$    \\
& & $f_{\rm quad}$        &   $0.1358\pm 0.03387$ &  $3.374\pm 0.1499$  & $0.913\pm 0.03068$    \\
& & $f_{\rm varia}$       &   $0.1099\pm 0.02147$ &  $3.370\pm 0.1427$  & $\mathbf{0.882\pm 0.02697}$    \\
\bottomrule[1pt]
\end{tabular}
\vspace{2pt}

The performance in the testing phase of five meta-learning training objectives on the shuffled pixels and permuted labels MNIST environments  with different prior models is analyzed in the above Table, in terms of test bound, test loss and test error.
\end{table*}

\begin{figure*}[!htbp]
	\centering 
	\subfigure[]{
        \label{Pixels_bound_training}
        \includegraphics[width=0.2\textwidth]{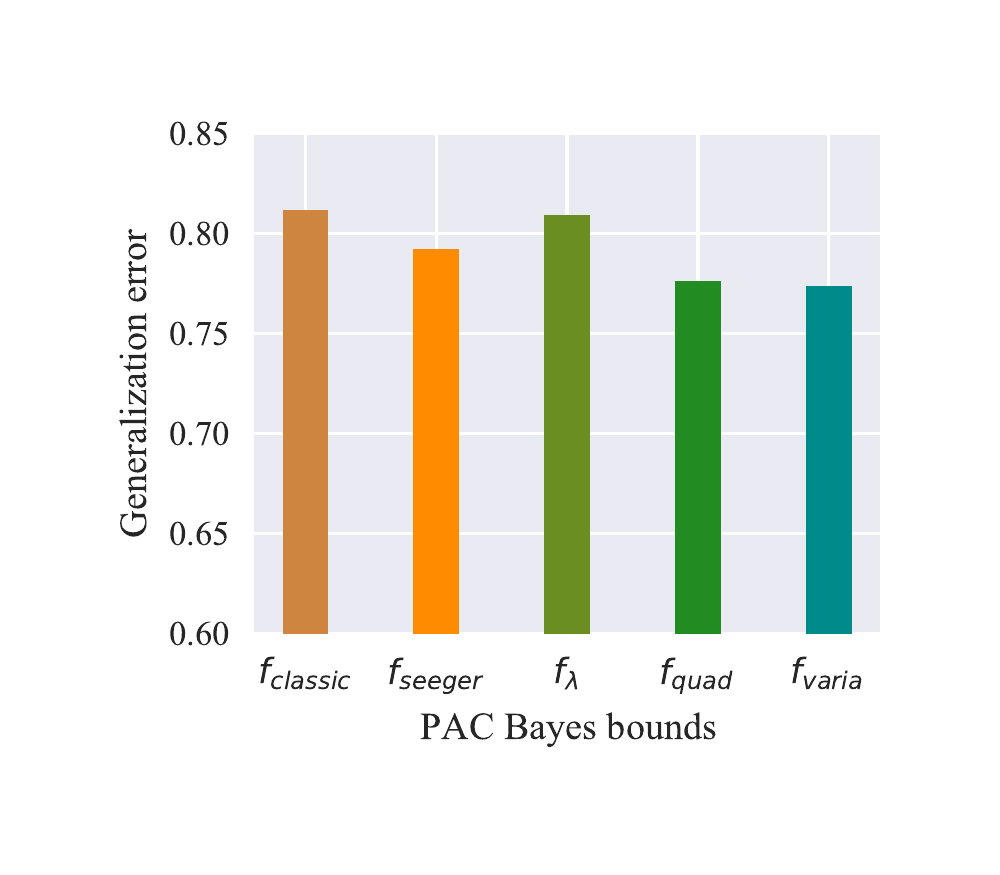}}
	\subfigure[]{
	    \label{Pixels_bound_testing}
        \includegraphics[width=0.2\textwidth]{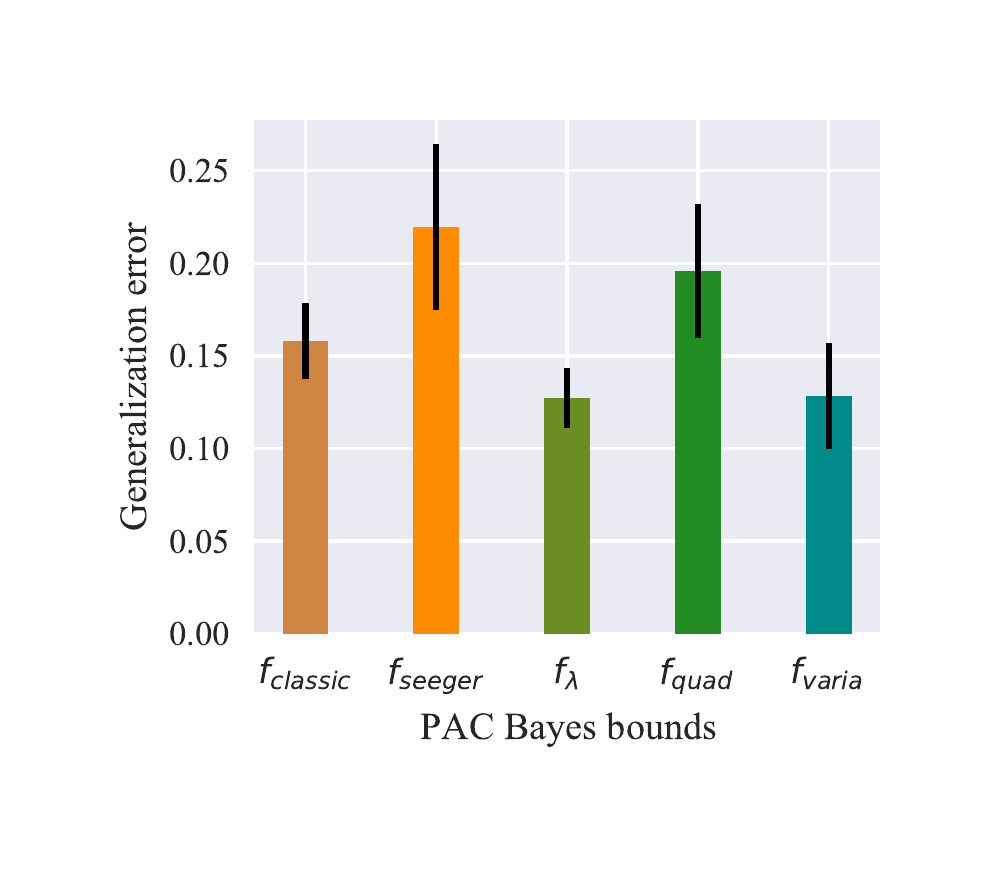}}
	\subfigure[]{
        \label{Labels_bound_training}
        \includegraphics[width=0.2\textwidth]{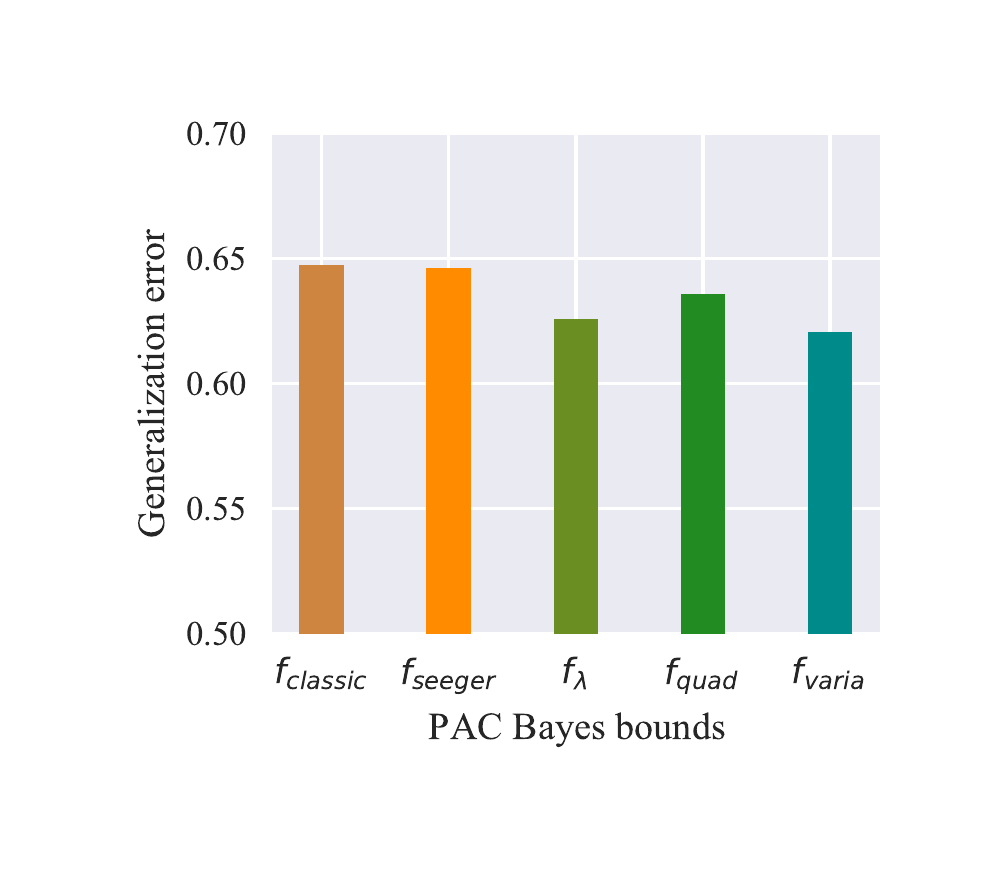}}
	\subfigure[]{
	    \label{Labels_bound_testing}
        \includegraphics[width=0.2\textwidth]{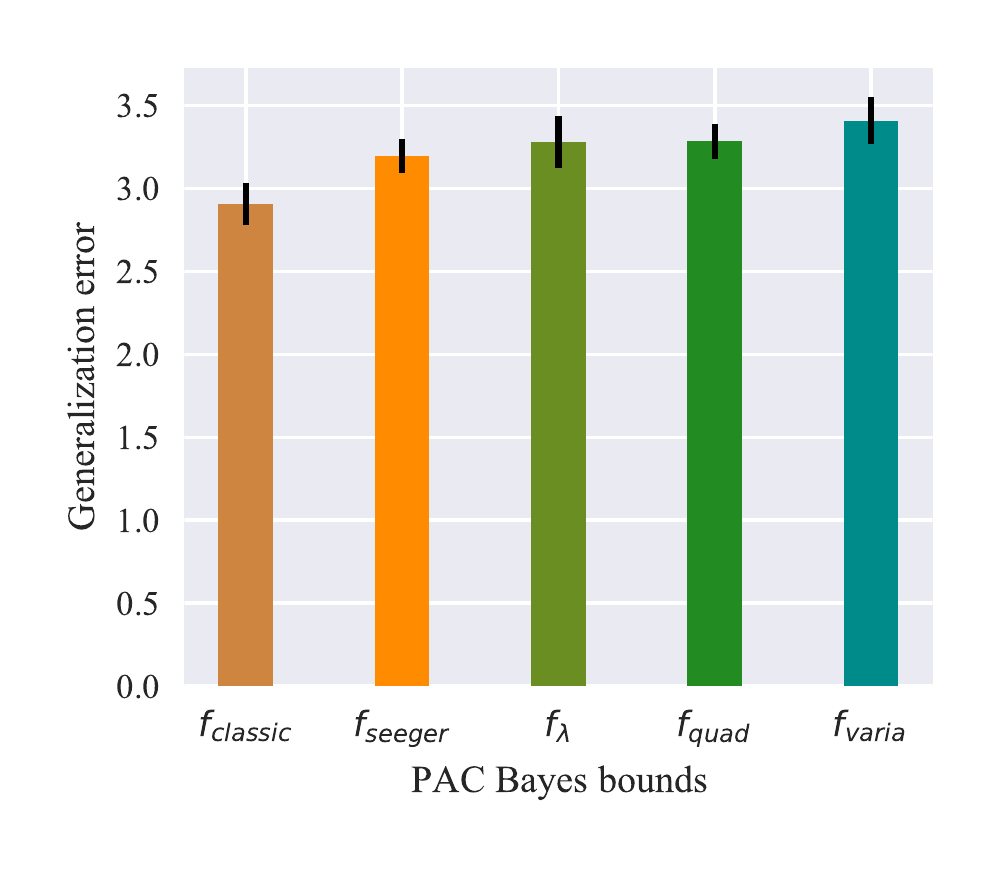}}
    \subfigure[]{
        \label{DP_Pixels_bound_training}
        \includegraphics[width=0.2\textwidth]{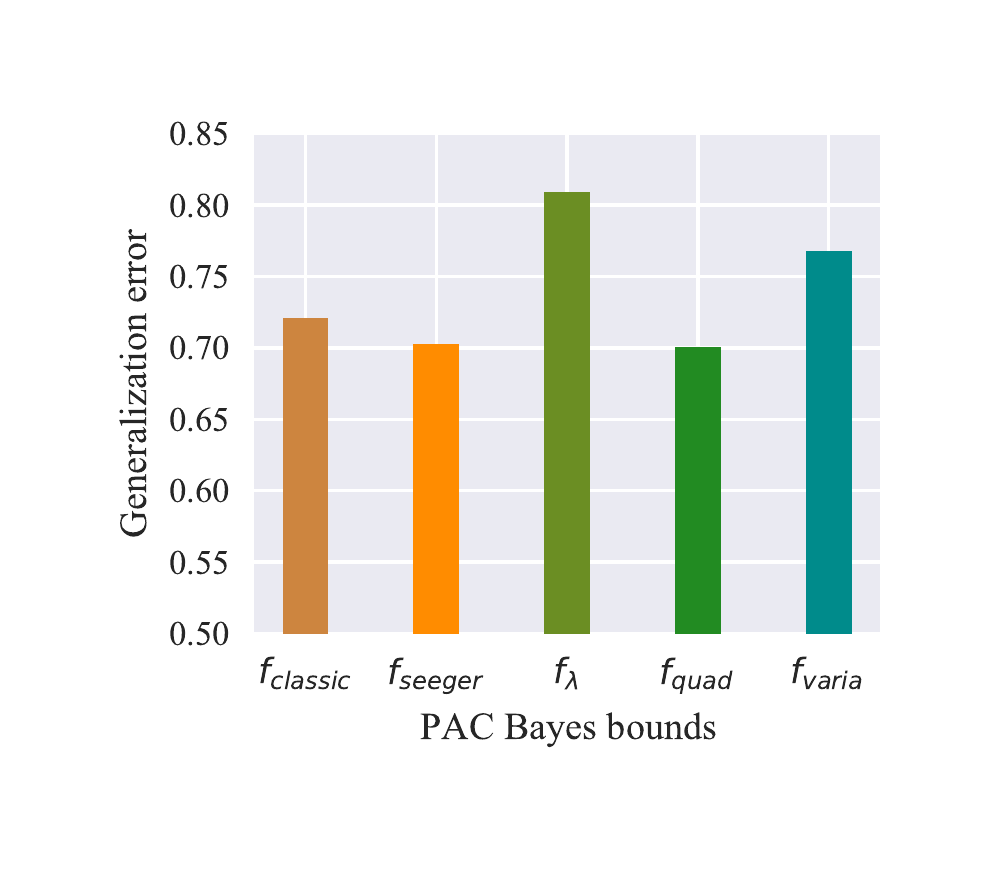}}
    \subfigure[]{
        \label{DP_Pixels_bound_testing}
        \includegraphics[width=0.2\textwidth]{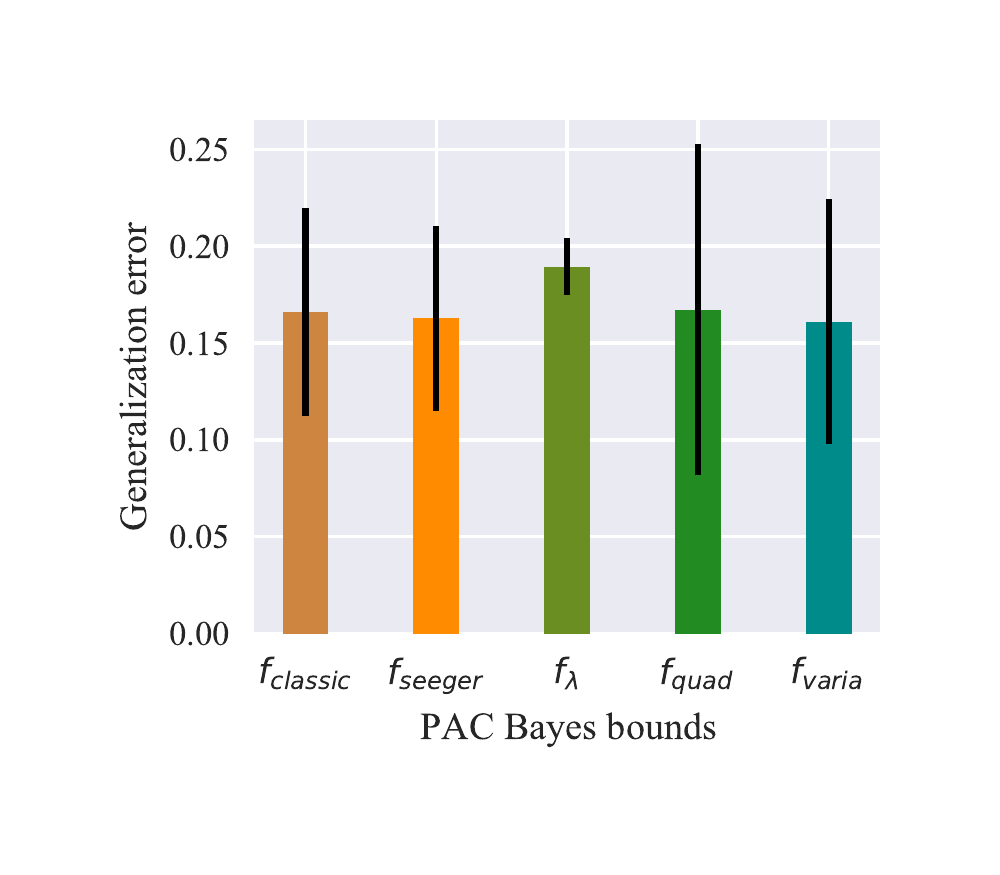}}
	\subfigure[]{
        \label{DP_Labels_bound_training}
        \includegraphics[width=0.2\textwidth]{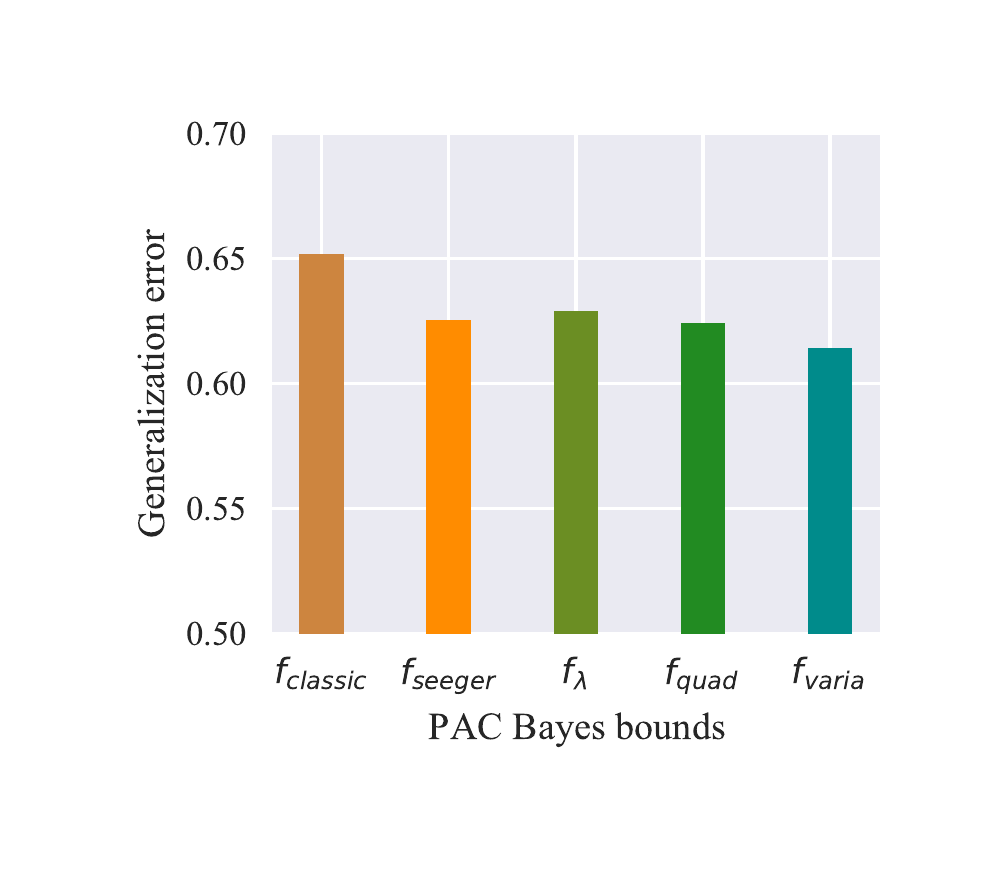}}
    \subfigure[]{
        \label{DP_Labels_bound_testing}
        \includegraphics[width=0.2\textwidth]{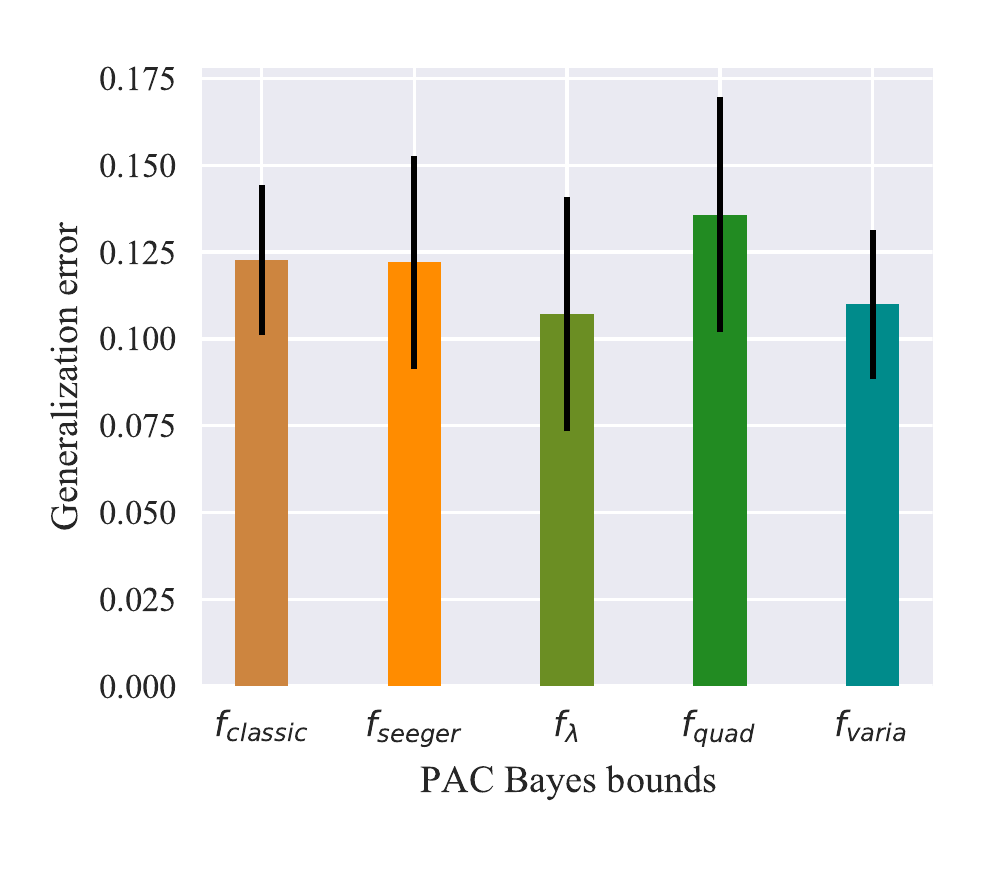}}
	\caption{The average performance of learning new tasks with different number of training tasks. (a) Comparison of different PAC-Bayes bounds in training phase in permuted pixels environment; (b) The average test bound of learning new tasks for permuted pixels with $95\%$ confidence interval; (c) Comparison of different PAC-Bayes bounds in training phase in permuted labels environment; (d) The average test bound of learning new tasks for permuted labels with $95\%$ confidence interval; (e)Comparison of generalization error bounds for permuted pixels with data-dependent prior; (f) The average test bound with data-dependent prior for permuted pixels with $95\%$ confidence interval; (g) Comparison of generalization error bounds for permuted labels with data-dependent prior; (h) The average test bound with data-dependent prior for permuted labels with $95\%$ confidence interval.}
	\label{Performance_analysis}
\end{figure*}

We also compare five meta-learning PAC-Bayes bounds on two different MNIST environments. Those consist of meta-learning McAllester PAC-Bayes bound ($f_{\rm classic}$), meta-learning Seeger PAC-Bayes bound ($f_{\rm Seeger}$), meta-learning PAC-Bayes $\lambda$ bound ($f_{\lambda}$), meta-learning PAC-Bayes quadratic bound ($f_{\rm quad}$) and meta-learning PAC-Bayes variational bound ($f_{\rm varia}$). As shown in Table \ref{Table_analysis_train}, the performance of various training objectives in the training phase with a random prior model is analyzed, in terms of bound, task complexity, meta complexity, empirical loss and estimated error. Figure \ref{Pixels_bound_training} demonstrates that the proposed three meta-learning PAC-Bayes achieve a competitive bound, especially for the PAC-Bayes variational bound.

Furthermore, the performance of a learned meta-learner on new tasks with five training objectives is also established. Table \ref{Table_analysis_test} shows the specific result of generalization performance and accuracy in the testing phase. As indicated in Figure \ref{Pixels_bound_testing}, the proposed meta-learning PAC-Bayes $\lambda$ bound and meta-learning PAC-Bayes variational bound perform tighter generalization error bound. In addition, these two training objectives lead to improved accuracy. 

% Same conclusions also can be drawn in the permuted labels experiment.

\subsection{PAC-Bayes bounds with data-dependent prior}

In this section, meta-learning PAC-Bayes bounds with data-dependent prior algorithms are verified on the permuted MNIST dataset. We experiment both with priors centered at randomly set weights and priors learnt by ERM on a part of dataset. Specifically, training data is randomly divided into two separate datasets: $30\%$ is used to learn a prior model by ERM approach and the remaining data is applied to train the meta-learner. We run about 10 epochs in the training phase and 30 epochs in the testing phase to build the prior.

First, the discrepancy between the prior model with randomly initialized weights and the learned data-dependent prior model is compared in Figures \ref{DP_train_model_compare} and Figure \ref{DP_Labels_model_compare}. It is obviously that, compared with the random prior model, the nature of the parameter learned in the data-dependent prior model is closer to the finally posterior model, which means it can achieve enhanced convergence performance. Besides, as shown in Figure \ref{DP_train_analysis}, where the convergence performance between a random prior model and a data-dependent prior model during the training phase is analyzed. Obviously, the meta-learning PAC-Bayes bound with data-dependent prior demonstrates a faster convergence ability with a series of epochs, in terms of generalization error bound, accuracy, empiric loss, task complexity and meta-complexity.

Furthermore, five meta-learning training objectives on two different MNIST environments are substantiated  (See Table \ref{Table_analysis_train} and Table \ref{Table_analysis_test}). As shown in Figure \ref{DP_Pixels_bound_training} and Figure \ref{DP_Pixels_bound_testing}, comparison between two classical meta-learning PAC-Bayes bounds, the proposed meta-learning PAC-Bayes $\lambda$ bound and meta-learning PAC-Bayes variational bound achieve a competitive generalization performance. The same conclusions can also can be drawn in the testing phase as shown in Figure \ref{DP_Labels_bound_training} and Figure \ref{DP_Labels_bound_testing}.

% \begin{figure}[!htbp]
% \centering
% \includegraphics[width=0.30\textwidth]{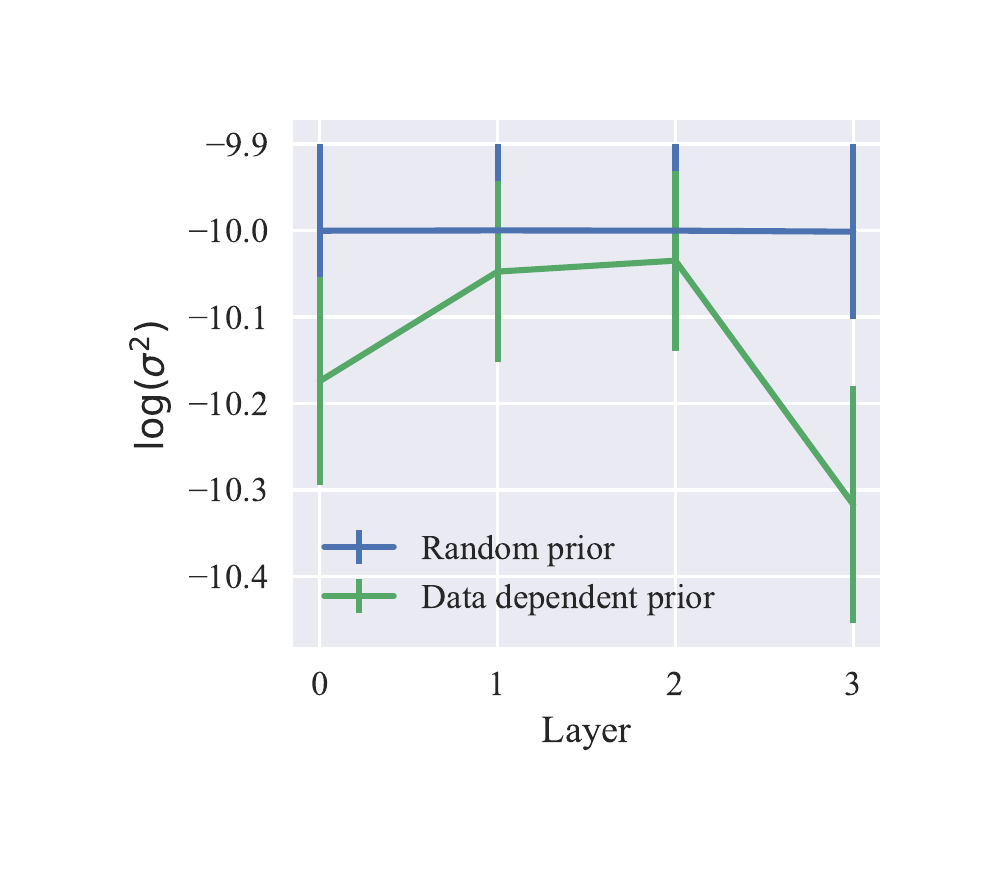}
% \caption{Prior model parameter comparison between random prior model and learned data-dependent prior model.}
% \label{DP_train_model_compare}
% \end{figure}

% \begin{figure}[!htbp]
% \centering
% \includegraphics[width=0.4\textwidth]{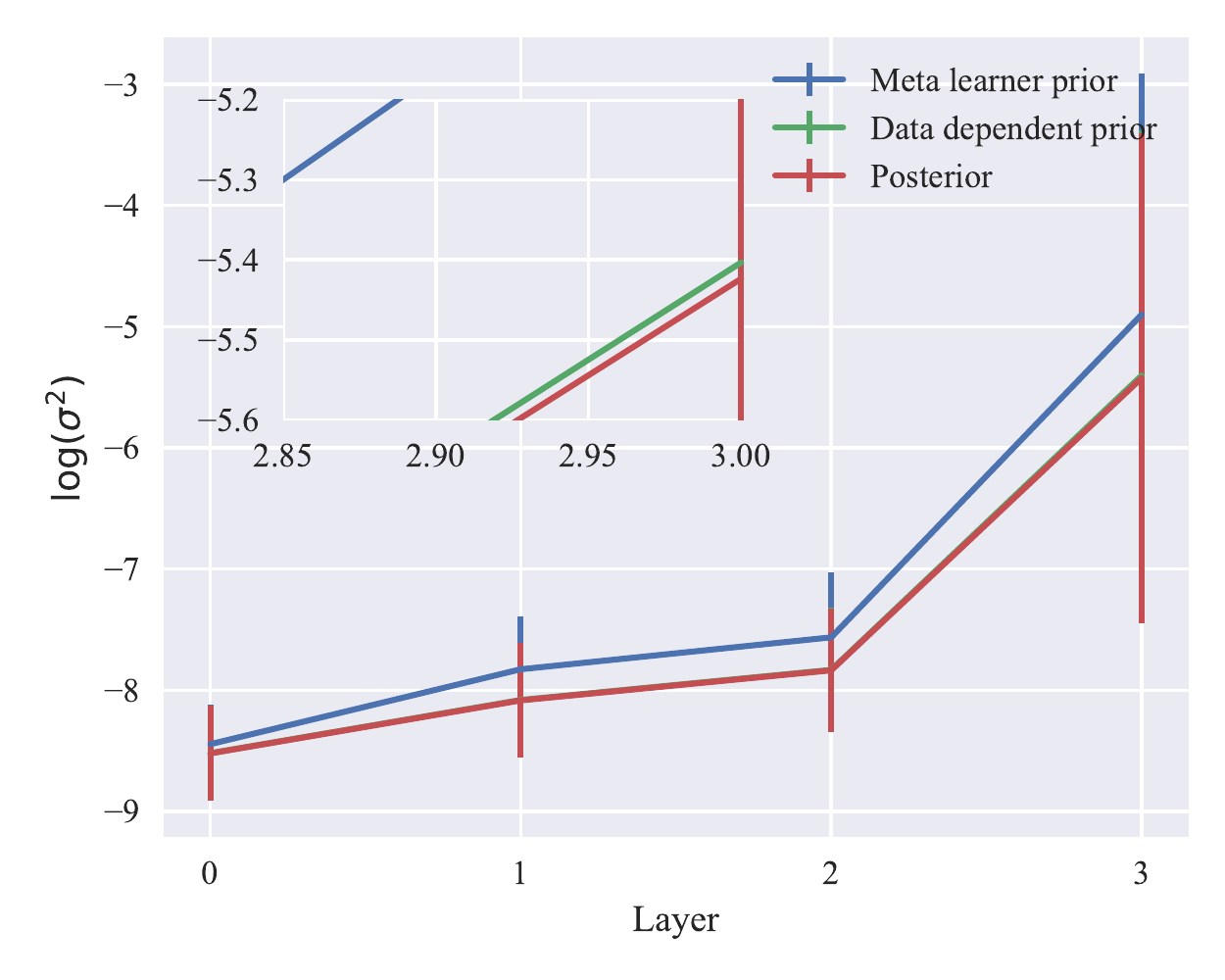}
% \caption{Comparison of models for permuted labels with data-dependent prior.}
% \label{DP_Labels_model_compare}
% \end{figure}

\begin{figure}[!htbp]
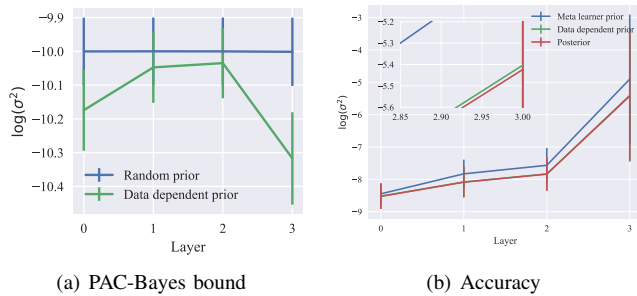

	\centering
	\subfigure[PAC-Bayes bound]{
        \label{DP_train_model_compare}
        \includegraphics[width=0.22\textwidth]{DP_train_model_compare.pdf}}
	\subfigure[Accuracy]{
	    \label{DP_Labels_model_compare}
        \includegraphics[width=0.24\textwidth]{DP_Labels_model_compare.pdf}}
	\caption{Parameter analysis of models with different prior. (a)Prior model parameter comparison between random prior model and learned data-dependent prior model; (b) Comparison of models for permuted labels with data-dependent prior. }
	\label{model_compare_parameter}
\end{figure}

\begin{figure*}[!htbp]
	\centering
% 	\vspace{-0.35cm}
% 	\subfigtopskip=2pt 
% 	\subfigbottomskip=2pt
% 	\subfigcapskip=-5pt
	\subfigure[PAC-Bayes bound]{
        \label{DP_train_ana_bound}
        \includegraphics[width=0.22\textwidth]{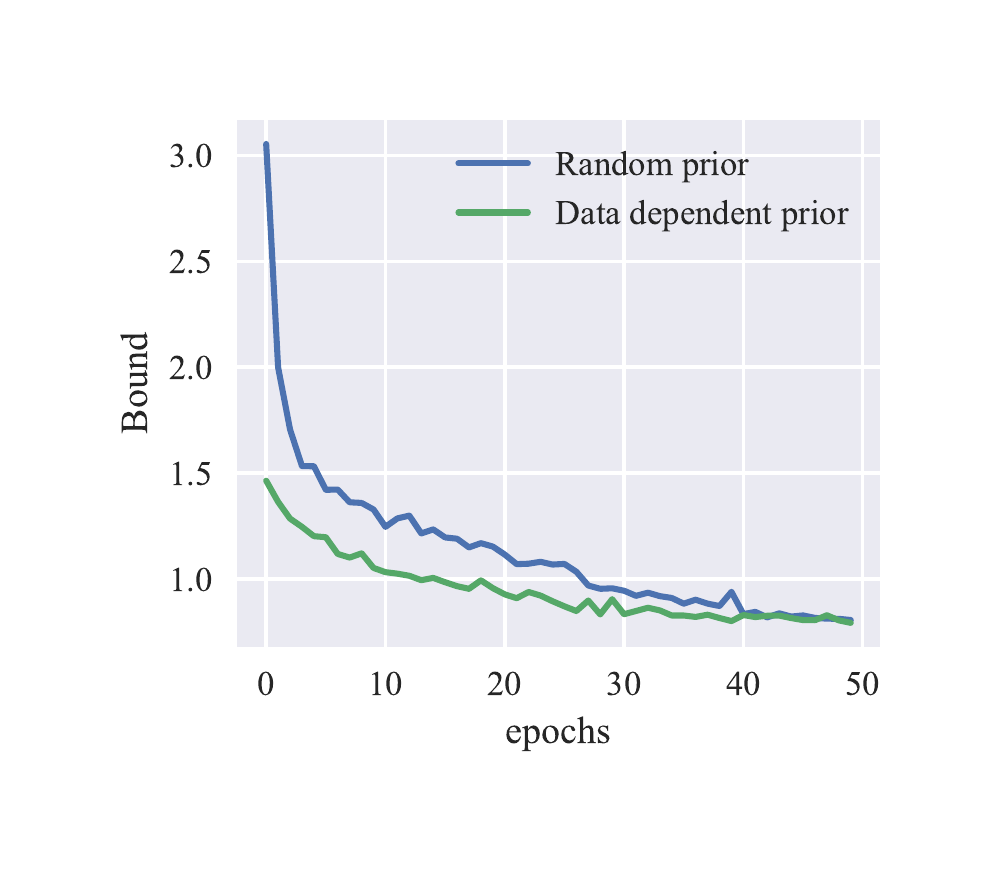}
        }
	\subfigure[Accuracy]{
	    \label{DP_train_ana_accuracy}
        \includegraphics[width=0.22\textwidth]{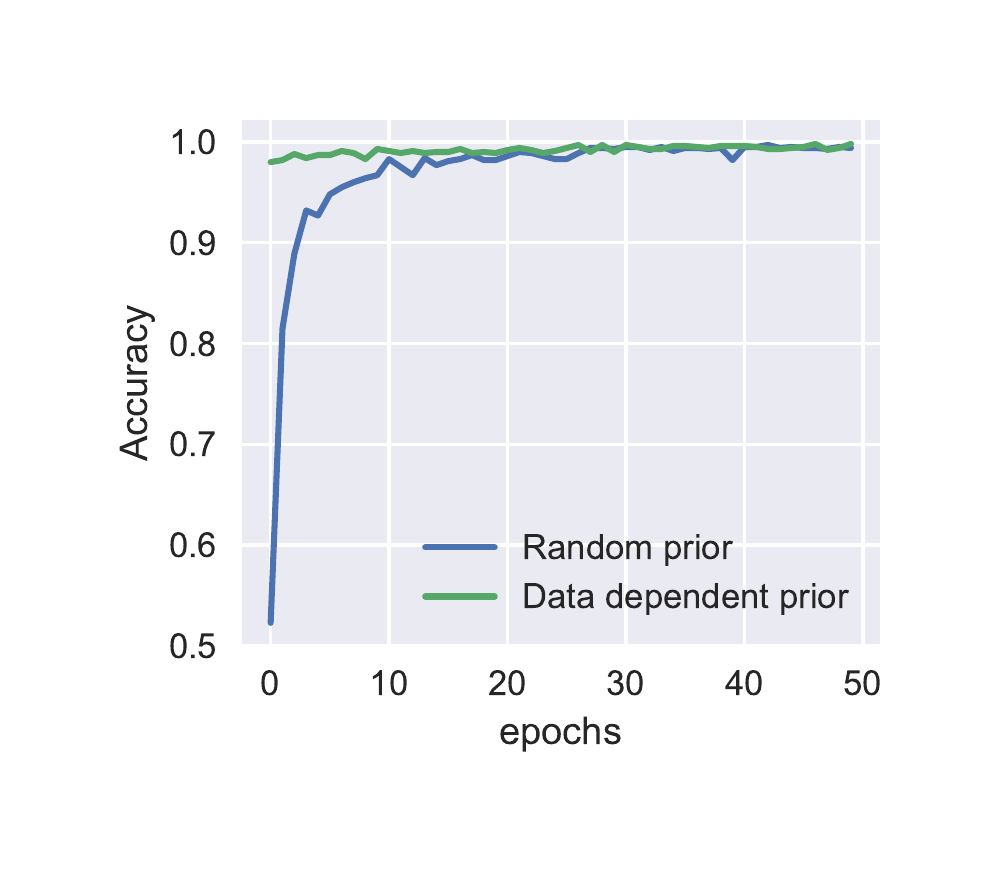}
        }
	\subfigure[Empiric loss]{
        \label{DP_train_ana_empiric_loss}
        \includegraphics[width=0.22\textwidth]{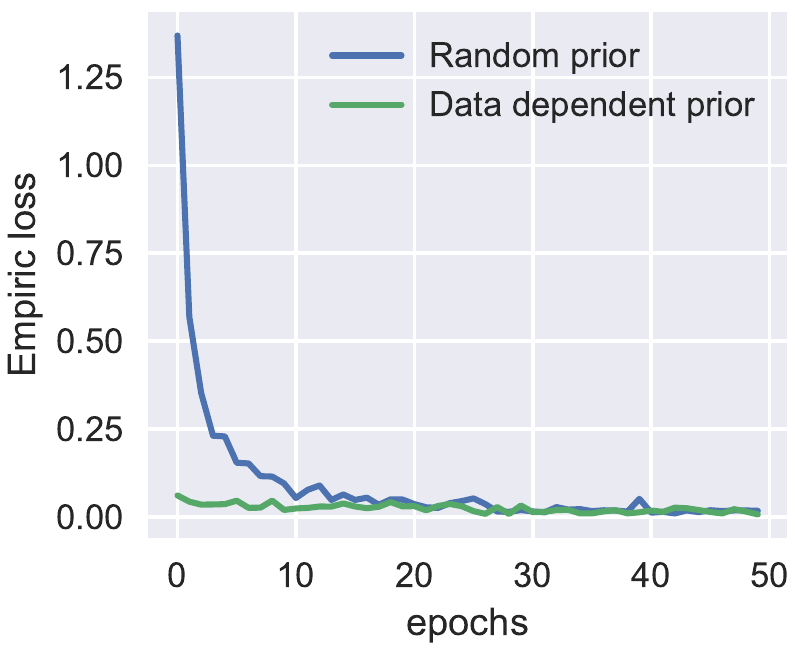}} \\
     \subfigure[Task complexity]{
	    \label{DP_train_ana_Task_Comp}
        \includegraphics[width=0.22\textwidth]{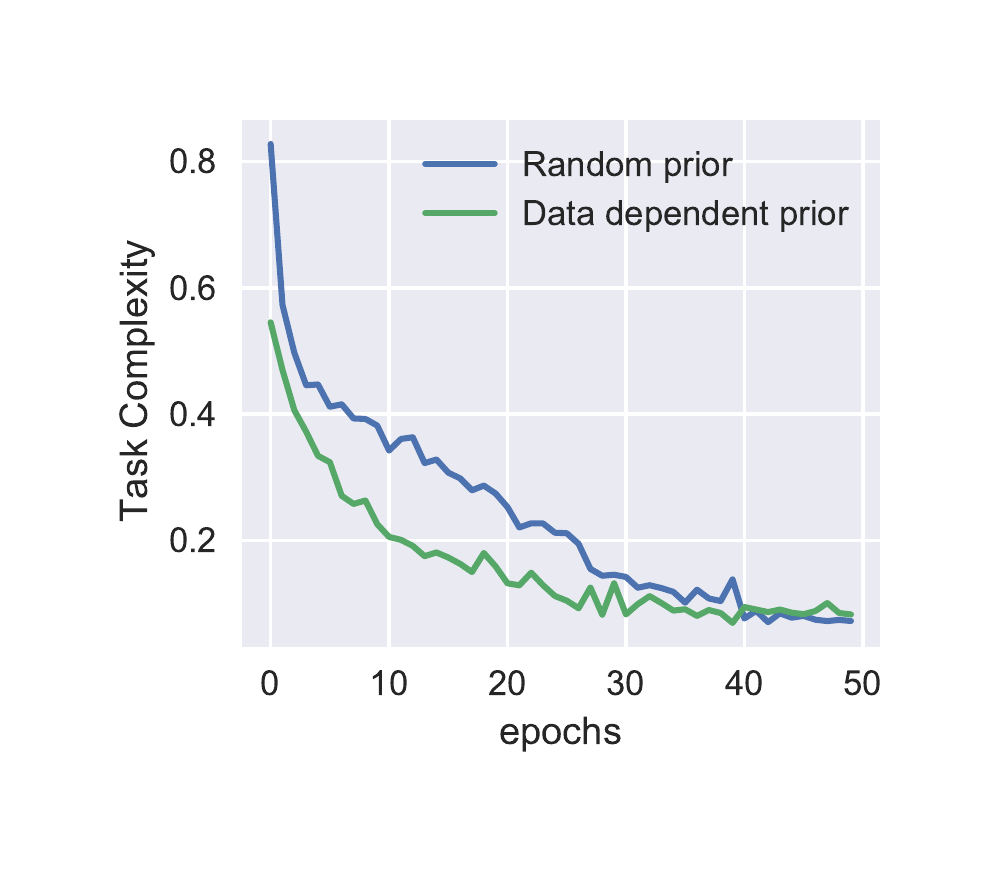}
        }
    %  多个子图跨页显示
    % \end{figure}
    % \addtocounter{figure}{-1} %先欺骗LaTeX图形计数器
    % \begin{figure}
    % \addtocounter{figure}{1} %再告诉LaTeX图形计数器真相
    %  多个子图跨页显示
    % \centering
	\subfigure[Meta complexity]{
        \label{DP_train_ana_Meta_Comp}
        \includegraphics[width=0.22\textwidth]{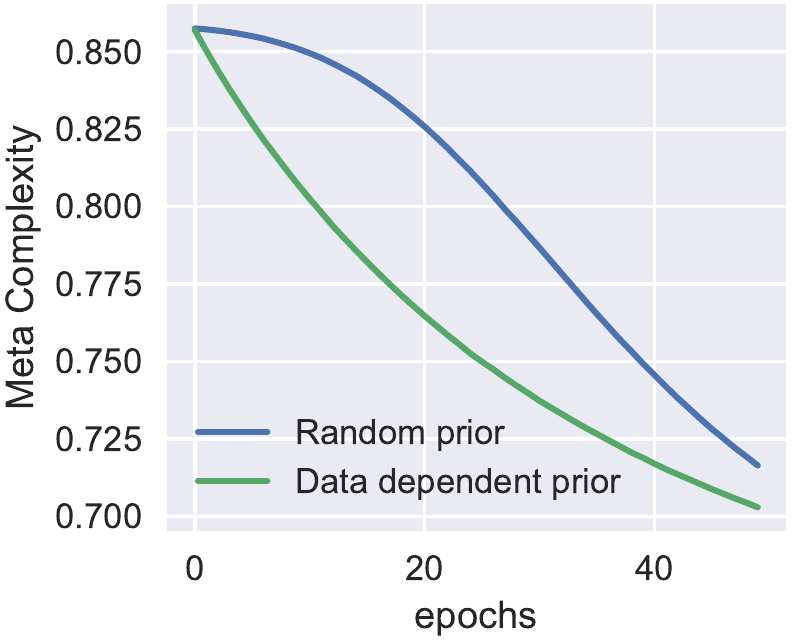}
        }  
	\caption{Performance analysis of meta-learning with data-dependent prior (training objective is PAC-Bayes quadratic bound).}
	\label{DP_train_analysis}
\end{figure*}

% \subsection{Experiment results in permuted labels environment}
% In this section, the experiment results in permuted labels environment are analyzed. 

% First, as shown in Table \ref{Table_analysis_train} and Table \ref{Table_analysis_test}, we analyze the weights uncertainty of prior model by layers in the permuted labels environment.

% We compare five meta-learning training objectives on permuted labels environment in both training phase and testing phase, including $f_{\rm classic}$, $f_{\rm Seeger}$, $f_{\lambda}$, $f_{\rm quad}$ and $f_{\rm varia}$. The experiments are done under settings as described above.

% Furthermore, meta-learning PAC-Bayes bounds with data-dependent prior algorithms are verified on the permuted labels environment. First, the difference between randomly prior model and learned data-dependent prior model is compared in Figure \ref{DP_Labels_model_compare}. Besides, the generalization performance of five training objective are analyzed i both training phase and testing phase.

\section{Conclusions and future work}\label{section_8_conclusion}
In this paper, the meta-learning PAC-Bayes bounds with data-dependent prior algorithms are explored. The proposed theory can be applied to develop a practical algorithm, which can achieve a balance between model accuracy and generalization performance. First, based on the PAC-Bayes relative entropy bound, the meta-learning PAC-Bayes $\lambda$ bound and meta-learning PAC-Bayes quadratic bound are derived. Furthermore, with an eye to achieving improved generalization performance, the meta-learning PAC-Bayes variational bound is also investigated. Those bounds have been applied to develop a practical meta-learning model with generalization performance guarantee and reduced overfitting. Next, in order to improve the convergence ability, combining the ERM approach, meta-learning PAC-Bayes bounds with data-dependent prior algorithms are also proposed.

The results of our experiments on two different MNIST environments, including shuffled pixels and permuted labels, demonstrate that meta-learning PAC-Bayes $\lambda$ bound and meta-learning PAC-Bayes variational bound can achieve competitive performances in terms of generalization error upper bound and estimation accuracy in both training and testing phases. Moreover, meta-learning PAC-Bayes bound with data-dependent prior can attain rapid convergence ability.

In future work, one could further investigate different prior distribution, such as distribution-dependent prior, to achieve faster convergence ability and greater accuracy. We note that the KL term dominates the generalization upper bound of PAC-Bayes theory, and  will explore efficient ways to optimize this term.

% \section*{Acknowledgment}
% This work was supported by the Australian Research Council through the Discovery Project under Grant DP200100700.

%Dr. Reveryrand would like to acknowledge the funding by XLIM, Limoges, France. 
% The authors would like to thank Dr. David Root and Dr. Jean-Pierre Teyssier at Agilent Technologies for the loan of the time-domain nonlinear measurement equipment and TriQuint Semiconductor for the donation of the transistors. 

% if have a single appendix:
%\appendix[Proof of the Zonklar Equations]
% or
%\appendix  % for no appendix heading
% do not use \section anymore after \appendix, only \section*
% is possibly needed

% use appendices with more than one appendix
% then use \section to start each appendix
% you must declare a \section before using any
% \subsection or using \label (\appendices by itself
% starts a section numbered zero.)
%

% ============================================
%\appendices
%\section{Proof of the First Zonklar Equation}
%Appendix one text goes here %\cite{Roberg2010}.

% you can choose not to have a title for an appendix
% if you want by leaving the argument blank
%\section{}
%Appendix two text goes here.

% use section* for acknowledgement
%\section*{Acknowledgment}

%The authors would like to thank D. Root for the loan of the SWAP. The SWAP that can ONLY be usefull in Boulder...

% Can use something like this to put references on a page
% by themselves when using endfloat and the captionsoff option.
\ifCLASSOPTIONcaptionsoff
  \newpage
\fi

% trigger a \newpage just before the given reference
% number - used to balance the columns on the last page
% adjust value as needed - may need to be readjusted if
% the document is modified later
%\IEEEtriggeratref{8}
% The "triggered" command can be changed if desired:
%\IEEEtriggercmd{\enlargethispage{-5in}}

% ====== REFERENCE SECTION

%\begin{thebibliography}{1}

% IEEEabrv,

\bibliographystyle{IEEEtran}
\bibliography{MTT_reveyrand}

\end{document}